\newtheorem{example}{Example}
\newtheorem{theorem}{Theorem}
\newtheorem{proposition}{Proposition}
\newtheorem{corollary}{Corollary}
\newtheorem{definition}{Definition}
\newtheorem{notation}{Notation}
\author{Ismaïl Baaj \\ Univ. Artois, CNRS, CRIL, F-62300 Lens, France \\  \href{baaj@cril.fr}{baaj@cril.fr}}
\title{Chebyshev distances associated to the second members of systems  of max-product/Lukasiewicz  fuzzy relational equations}
\date{}
\begin{document}

\maketitle

\begin{abstract}
In this article, we study the inconsistency of a system of $\max$-product fuzzy relational equations  and  of a system of $\max$-Lukasiewicz fuzzy relational equations.
For a system of $\max-\min$ fuzzy relational equations $A \Box_{\min}^{\max} x = b$ and using the $L_\infty$ norm,    \citep{arxiv.2301.06141} showed that
the Chebyshev distance $\Delta = \inf_{c \in \mathcal{C}} \Vert b - c \Vert$, where $\mathcal{C}$ is the set of second members of consistent systems defined with the same matrix $A$, can be computed by an explicit analytical formula according to the components of the matrix $A$ and its second member $b$.  In this article, we give analytical formulas analogous to that of \citep{arxiv.2301.06141}  to compute the Chebyshev distance associated to the second member of a system of $\max$-product fuzzy relational equations and that associated to the second member of a system of $\max$-Lukasiewicz fuzzy relational equations.
\end{abstract}

\keywords{Fuzzy set theory ; Systems of fuzzy relational equations ; Chebyshev approximations }

\section{Introduction}

Artificial Intelligence (AI) applications based on systems of fuzzy relational equations emerged thanks to \citep{sanchez1976resolution,sanchez1977}'s seminal work on solving systems of $\max-\min$ fuzzy relational equations. \citep{sanchez1976resolution}  gave necessary and sufficient conditions for a system to be consistent i.e., to have solutions. \citep{sanchez1977} showed that if the system is consistent, it has a greater solution and many minimal solutions, and he then described the complete set of solutions of the system.

However, the inconsistency of these systems remains difficult to address. While the majority of the approaches have investigated how to determine approximate solutions of an inconsistent system \citep{ di2013fuzzy,gottwald1986characterizations,klir1994approximate, pedrycz1983numerical,pedrycz1990algorithms,wangming1986fuzzy,WEN2022,wu2021analytical,wu2022analytical,xiao2019linear}, Pedrycz highlighted another strategy in \citep{pedrycz1990inverse}.  Given an inconsistent system, Pedrycz proposed to slightly modify its second member to obtain a consistent system. Some authors proposed algorithms for his procedure \citep{cuninghame1995residuation,li2010chebyshev}. In a recent preprint \citep{arxiv.2301.06141},  for an inconsistent system of $\max-\min$ fuzzy relational equations $A \Box_{\min}^{\max} x = b$, the author gave an explicit analytical formula to compute, using the $L_\infty$ norm, the Chebyshev distance $\Delta = \inf_{c \in \mathcal{C}} \Vert b - c \Vert$ where $\mathcal{C}$ is the set of second members of consistent systems defined with the same matrix $A$. As a Chebyshev approximation of the second member $b$ is a vector $c$ such that $\Vert b - c \Vert = \Delta$ and the system $A \Box_{\min}^{\max} x = c$ is a consistent system \citep{cuninghame1995residuation,li2010chebyshev}, the  formula of $\Delta$ led  the author of \citep{arxiv.2301.06141} to give the complete description of the structure of the set of Chebyshev approximations of  $b$. He then  described the approximate solutions set of the  considered inconsistent system, which is the set of solutions of  consistent systems $A \Box_{\min}^{\max} x = c$, where $c$ is a  Chebyshev approximation of $b$. \\
In our opinion, it would be relevant to study how these results can be extended to systems of $\max-T$ fuzzy relational equations, where $T$ is a continuous t-norm.
As it is well-known, many applications are based on systems of fuzzy relational equations  with the max-product composition or the max-Lukasiewicz composition e.g., \citep{di2013fuzzy,hirota1999fuzzy,nobuhara2006motion}.

\noindent  In this article, we tackle the problem of the inconsistency of a system of max-product fuzzy relational equations
and of a system of max-Lukasiewicz fuzzy relational equations. 
 We extend to systems of $\max-T$ fuzzy relational equations, where $T$ is a continuous t-norm, the definition and the properties  of the idempotent application $F$, see (\ref{eq:F}), introduced in \citep{arxiv.2301.06141}. This application was initially proposed as a reformulation of \citep{sanchez1976resolution}'s result  for  checking if a system of $\max-\min$ fuzzy relational equations defined with a fixed matrix and a given vector used as second member
is a consistent system. The idempotence and the right-continuity of the application $F$ lets us extend a fundamental result given for $\max-\min$ composition (Theorem 1 of \citep{cuninghame1995residuation}) to $\max-T$ composition (Theorem \ref{th:1}), and to obtain the greatest Chebyshev approximation of the second member of such a system (Corollary \ref{coro:1}). \\
Then, we give an explicit analytical formula to compute the Chebyshev distance associated to the second member of a system of $\max$-product fuzzy relational equations (Theorem \ref{th:2}). In the case of a system of $\max$-Lukasiewicz fuzzy relational equations, an analogous analytical formula is also given (Theorem \ref{th:3}). Each Chebyshev distance is computed according to the components of the matrix and the second member of the considered system.  

\noindent The article is structured as follows. In (Section \ref{sec:background}), we begin by reminding the necessary background on solving a system of $\max-T$ fuzzy relational equations where $T$ is a continuous t-norm. We extend the idempotent application $F$ of \citep{arxiv.2301.06141} to a system of $\max-T$ fuzzy relational equations.   
We remind the definition of the Chebyshev distance associated to the second member of a system of $\max-T$ fuzzy relational equations and extend to such a system a  fundamental result given for $\max-\min$ composition (Theorem 1 of   \citep{cuninghame1995residuation}).
In (Section \ref{sec:prelim}), we study the solving of two inequalities, which are involved in the computation of the Chebyshev distance associated to the second member of a system of $\max$-product fuzzy relational equations and that associated to the second member of a system of $\max$-Lukasiewicz fuzzy relational equations. In (Section \ref{sec:formulas}), we give the explicit analytical formulas to compute the two Chebyshev distances. 
Finally, we conclude with some perspectives. 

\noindent Our results are illustrated with examples (note that in all examples, the numbers are rounded to two decimal places).

\section{Background}
\label{sec:background}
Throughout this section, $T$ denotes a continuous t-norm.

\noindent
In what follows, we begin by reminding the definition and properties of a continuous t-norm   and its
 associated residual implicator.
To the t-norm defined by the usual product, we make explicit its associated residual implicator, as well as the one associated to the t-norm of Lukasiewicz.  \\
We then extend to systems of $\max-T$ fuzzy relational equations the idempotent application $F$  introduced in \citep{arxiv.2301.06141} for checking if a system of  $\max-\min$ fuzzy relational
equations defined with a fixed matrix and a given vector used as second member
is a consistent system. Finally, we remind the definition of the Chebyshev distance associated to the second member of a system of $\max-T$ fuzzy relational equations and extend to such a system  a  fundamental result  proven in  \citep{cuninghame1995residuation} for $\max-\min$ composition. 

\subsection{Notations}
The following notations were given in \citep{arxiv.2301.06141} for the case of a system of $\max-\min$ fuzzy relational equations and we reuse them in this article.

\noindent
 By $[0,1]^{n\times m}$ we denote the set of matrices of size $(n,m)$ i.e., $n$ rows and $m$ columns, whose components are in $[0,1]$. In particular:
\begin{itemize}
    \item $[0,1]^{n\times 1}$ denotes the set of column vectors of $n$ components,
    \item $[0,1]^{1\times m}$ denotes  the set of row matrices of $m$ components. 
\end{itemize}

\noindent On the set $[0,1]^{n\times m}$, we use the order relation $\leq$ defined by:
\[ A \leq B \quad \text{iff we have} \quad  a_{ij} \leq b_{ij} \quad \text{ for all } \quad 1 \leq i \leq n, 1 \leq j \leq m,    \]
\noindent where $A=[a_{ij}]_{1 \leq i \leq n, 1 \leq j \leq m}$ and $B=[b_{ij}]_{1 \leq i \leq n, 1 \leq j \leq m}$. 

For $x,y,z,u,\delta \in [0,1]$, we put:
\begin{itemize}
    \item $x^+ = \max(x,0)$,
    \item $\overline{z}(\delta) = \min(z+\delta,1)$, 
    \item $\underline{z}(\delta) = \max(z-\delta,0) = (z-\delta)^+$
\end{itemize}
\noindent and we have the following equivalence in $[0,1]$: 
\begin{equation}\label{ineq:xyxbarybar}
    \mid x - y \mid \leq \delta \Longleftrightarrow \underline{x}(\delta) \leq y \leq \overline{x}(\delta).
\end{equation}

\noindent For our work, to a column-vector $b = [b_i]_{1 \leq  i \leq n}$ and a number $\delta \in [0,1]$, we associate two column-vectors:
    \begin{equation}\label{def:bstarhautbas} 
  \underline{b}(\delta) = [(b_i  - \delta)^+]_{1 \leq  i \leq n} \quad \text{and} \quad \overline{b}(\delta) = [\min(b_i  + \delta, 1)]_{1 \leq  i \leq n}.\end{equation}
\noindent These vectors $\underline{b}(\delta)$ and $\overline{b}(\delta)$ were already introduced e.g.,  in  \citep{cuninghame1995residuation} (with others notations) and in \citep{li2010chebyshev}. \\
\noindent Then, from (\ref{ineq:xyxbarybar}), we deduce for any $c = [c_i]_{1\leq i \leq n} \in [0,1]^{n \times 1}$: 
\begin{equation}\label{ineq:bbbar}
    \Vert b - c \Vert \leq \delta \Longleftrightarrow \underline{b}(\delta) \leq c \leq \overline{b}(\delta).
\end{equation}
\noindent where $ \Vert b - c \Vert = \max_{1 \leq i \leq n}\mid b_i-c_i\mid$.
\subsection{T-norms and their associated residual implicators }

\noindent
A triangular-norm  (t-norm, see  \citep{klement2013Fulltriangular}) is a map  $T: [0,1] \times [0,1] \mapsto [0,1]$, which satisfies:
\begin{itemize}
    \item[] $T$ is commutative:   $T(x,y) = T(y,x)$, 
    \item[] $T$ is associative:  $T(x, T(y, z)) = T(T(x, y), z)$, 
    \item[] $T$ is increasing with respect to the second variable: $y \leq z \, \Longrightarrow \, T(x, y) \leq T(x, z)$,
    \item[]$T$ has $1$ as neutral element: $T(x, 1) = x$.
\end{itemize}

\noindent The residual implicator associated to the t-norm $T$ is the map 
${\cal I}_T : [0, 1 ] \times [0, 1] \rightarrow [0, 1 ] : (x, y) \mapsto {\cal I}_T(x, y) = \sup\{z \in [0, 1]\,\mid\, T(x, z) \leq y\}$.

\noindent
The main properties of the residual implicator ${\cal I}_T$ of a continuous t-norm $T$ are:  for all $a, b   \in [0, 1]$, we have: 
\begin{itemize}
    \item   
${\cal I}_T(a, b) = \max\{z \in [0, 1]\,\mid\, T(a, z) \leq b\}$. Therefore, $T(a, {\cal I}_T(a, b)) \leq b$.
\item
  ${\cal I}_T$ is left-continuous and decreasing in its first argument as well as right-continuous and increasing in its second argument.
  \item
  For all $z\in [0, 1]$, we  have: 
  $$T(a, z) \leq b \Longleftrightarrow z \leq {\cal I}_T(a, b).$$
  \item We have 
  $\,b \leq {\cal I}_T(a, T(a, b))$.
\end{itemize}

The t-norm defined by the usual product   is denoted by $T_P$. Its associated residual implicator is the Goguen product, and we have:
\begin{equation}\label{eq:tnormproduct}
    T_P(x,y) = x \cdot y \quad ;  \quad {\cal I}_{T_P}(x,y) = x \underset{GG}{\longrightarrow} y = \begin{cases}1 & \text{ if } x \leq y \\ \frac{y}{x} &\text{ if } x > y \end{cases}.
\end{equation}
Lukasiewicz's t-norm is denoted by $T_L$ and we have:
\begin{equation}\label{eq:tnormluka}
    T_{L}(x,y)= \max(x + y - 1, 0) = {(x + y - 1)}^{+} \quad ; \quad 
 {\cal I}_{T_L}(x,y)  = x \underset{L}{\longrightarrow} y = \min(1-x+y,1).
\end{equation}
In the rest of this section, we set a continuous t-norm denoted $T$.

\subsection{Solving systems of \texorpdfstring{$\max-T$}{max-T} fuzzy relational equations}

\noindent Let $A=[a_{ij}] \in [0,1]^{n\times m}$ be a matrix of size $(n,m)$ and $b=[b_{i}] \in [0,1]^{n\times 1}$ be a vector of $n$ components. The system of $\max-T$ fuzzy relational equations  associated to $(A,b)$  is of the form:
\begin{equation}\label{eq:sys}
    (S): A \Box_{T}^{\max} x = b,
\end{equation}
\noindent where $x = [x_j]_{1 \leq j \leq m} \in [0,1]^{m\times 1}$ is  an unknown vector of $m$ components and the operator $\Box_{T}^{\max}$ is the matrix product that uses the continuous  t-norm $T$ as the product and $\max$  as the addition. Equivalently, the system can also be written as:
\begin{equation*}
    \max_{1 \leq j \leq m} T(a_{ij},x_j) = b_i,\, \forall i \in \{1, 2,\dots, n\}.
\end{equation*}
\noindent  The studies on systems of fuzzy relational equations adopt two equivalent notation conventions, which differ in their representation of the unknown part and the second member: either as column vectors or as row vectors.  The transpose  map allows us to switch from one notation to the other.

\noindent To check if the system (S), see (\ref{eq:sys}), is consistent, we compute the following vector: \begin{equation}\label{eq:egretestsol}
e = A^t \Box_{{\cal I}_T}^{\min} b,\end{equation}
\noindent where $A^t$ is the transpose of $A$ and the matrix product $\Box_{{\cal I}_T}^{\min}$ uses the residual implicator  ${\cal I}_T$ (associated to $T$) as the product and $\min$ as the addition.
The vector $e$ is the potential maximal solution of the system $(S)$. 

\noindent Thanks to \citep{sanchez1976resolution}'s seminal work  on the solving of systems $\max-\min$ fuzzy relational equations, and \citep{pedrycz1982fuzzy,pedrycz1985generalized} and \citep{MIYAKOSHI198553} (using other t-norms than $\min$), we have the following equivalence:
\begin{equation}\label{eq:consiste}
    A \Box_T^{\max} x = b \text{ is consistent}\Longleftrightarrow A \Box_T^{\max} e = b. 
\end{equation}
In the rest of this subsection, we will give   (Proposition \ref{proposition:appF}) and show that the proof of its second statement implies the equivalence (\ref{eq:consiste}). \\
\noindent  The set of solutions of the system $(S)$ is denoted by:
    \begin{equation}\label{eq:setofsolutionsS}
{\cal  S  } = {\cal  S  }(A, b) = \{ v \in [0, 1]^{m \times 1} \,\mid \,  A \, \Box_{T}^{\max} v  = b \}. 
\end{equation}
\noindent The structure of the solution set was described by \citep{sanchez1977} (with the $\max-\min$ composition) and \citep{di1982solution,di1984fuzzy,di2013fuzzy}.

\begin{example}
Let: 
\begin{equation}\label{eq:ex:Ab}
    A = \begin{bmatrix}
    0.31	&0.49	&0.76\\
0.34	&0.9	&0.15\\
0.94	&0.47	&0.05\\
    \end{bmatrix} \text{ and } b = \begin{bmatrix}
    0.73\\ 0.84\\ 0.61
    \end{bmatrix}.
\end{equation}
To check if the  systems $A \Box_{T_P}^{\max} x = b$ and $A \Box_{T_L}^{\max} x = b$ are consistent,  we compute their respective potential greatest solution:
\begin{equation*} 
    e_{T_P} = A^t \Box_{{\cal I}_{T_P}}^{\min} b = \begin{bmatrix} 0.65\\0.93\\ 0.96 \end{bmatrix} \quad \text{ and } \quad e_{T_L} = A^t \Box_{{\cal I}_{T_L}}^{\min} b = \begin{bmatrix} 0.67 \\ 0.94\\ 0.97 \end{bmatrix}.
\end{equation*}
\noindent These two systems are consistent because:
\begin{equation*}
     A \Box_{T_P}^{\max} e_{T_P} = A \Box_{T_L}^{\max} e_{T_L} = b.
    \end{equation*}
\end{example}

\noindent In \citep{arxiv.2301.06141}, the author introduced an idempotent application denoted $F$,  to check if a system of $\max-\min$ fuzzy relational equations defined with a fixed matrix  $A =[a_{ij}]_{1 \leq i \leq n, 1 \leq j \leq m} \in [0,1]^{n\times m}$ and a given vector $c \in [0,1]^{n \times 1}$  used as second member is a consistent system.  The application is a reformulation of \citep{sanchez1976resolution}'s result. For systems of $\max-T$ fuzzy relational equations which uses a continuous t-norm  $T$, we extend the definition of the application $F$ as follows:
\begin{equation}\label{eq:F} 
    F :  [0, 1]^{n \times 1} \longrightarrow [0, 1]^{n \times 1} : c =[c_i] \mapsto F(c) =[F(c)_i]
\end{equation}
\noindent where:
\begin{equation}\forall i \in \{1, 2, \dots, n\},\, F(c)_i = \max_{1 \leq j \leq m} T(a_{ij}, \min_{1 \leq k \leq n} 
{\cal I}_T(a_{kj}, c_k)).\end{equation}

The properties of the application $F$ given in \citep{arxiv.2301.06141} for a system of  $\max-\min$ fuzzy relational equations are valid for systems of $\max-T$ fuzzy relational equations:  

\begin{proposition}\label{proposition:appF}
\mbox{}
\begin{enumerate}
\item $\forall c \in [0,1]^{n \times 1}$, $F(c) \leq c$.
\item For any  vector $c \in [0,1]^{n \times 1}$ we have:
\[ F(c) = c \Longleftrightarrow \text{ the system } A  \Box_{T}^{\max} x = c \text{ is consistent.} \]
 
    \item $F$ is idempotent i.e., $\forall c \in [0,1]^{n \times 1}, F(F(c)) = F(c)$. 

    \item  $F$ is increasing and right-continuous.
\end{enumerate}
 \end{proposition}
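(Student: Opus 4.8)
The plan is to prove the four statements in the given order, exploiting throughout the factorization
\[
F(c) = A\,\Box_{T}^{\max}\,e(c), \qquad e(c)_j = \min_{1\le k\le n}\mathcal{I}_T(a_{kj},c_k),
\]
so that $e(c) = A^t\Box_{\mathcal{I}_T}^{\min}c$ plays, for an arbitrary vector $c$, the role of the potential greatest solution (\ref{eq:egretestsol}). The whole argument rests on just two properties of the residual implicator recalled above: the residuation inequality $T(a,\mathcal{I}_T(a,b))\le b$ and the adjunction $T(a,z)\le b\Leftrightarrow z\le\mathcal{I}_T(a,b)$. For statement~1, I would fix $i$ and, in the inner minimum defining $e(c)_j$, retain only the index $k=i$, which gives $e(c)_j\le\mathcal{I}_T(a_{ij},c_i)$. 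Since $T$ is increasing in its second argument, $T(a_{ij},e(c)_j)\le T(a_{ij},\mathcal{I}_T(a_{ij},c_i))\le c_i$ by the residuation inequality; taking the maximum over $j$ yields $F(c)_i\le c_i$.

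For statement~2, the key step is to show that $e(c)$ is the greatest element among all $v$ satisfying $A\Box_{T}^{\max}v=c$. Indeed, if $v$ is such a solution then $T(a_{ij},v_j)\le c_i$ for every $i,j$, so the adjunction gives $v_j\le\mathcal{I}_T(a_{ij},c_i)$ for all $i$, hence $v_j\le e(c)_j$. Monotonicity of $\Box_{T}^{\max}$ then yields $c=A\Box_{T}^{\max}v\le A\Box_{T}^{\max}e(c)=F(c)$, which combined with statement~1 forces $F(c)=c$; this is the ($\Leftarrow$) direction. The ($\Rightarrow$) direction is immediate, since $F(c)=c$ exhibits $e(c)$ as a solution. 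Because $F(c)=A\Box_{T}^{\max}e(c)$ holds by construction, this same reasoning establishes the equivalence (\ref{eq:consiste}), exactly as announced in the text.

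Statement~3 is then a corollary of statement~2: by the very definition of $F$, the vector $e(c)$ solves $A\Box_{T}^{\max}x=F(c)$, so the system with second member $F(c)$ is consistent, and statement~2 applied to $F(c)$ gives $F(F(c))=F(c)$. For statement~4, monotonicity follows by chaining the monotonicities already listed: $\mathcal{I}_T$ increasing in its second argument, then $\min_k$, then $T$ in its second argument, then $\max_j$, so that $c\le c'$ propagates to $F(c)\le F(c')$. Right-continuity I would obtain componentwise, observing that $c_k\mapsto\mathcal{I}_T(a_{kj},c_k)$ is right-continuous (a stated property of $\mathcal{I}_T$) and that finite $\min$, the continuous t-norm $T$, and finite $\max$ each preserve right-continuity under composition; concretely, for any sequence $c^{(p)}\downarrow c$ the monotone limit $\lim_p F(c^{(p)})$ coincides with $F(c)$.

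The main obstacle I anticipate is the bookkeeping in statement~2, where one must argue carefully that $e(c)$ is genuinely the \emph{greatest} solution rather than merely some upper bound furnished by the adjunction, and that the two inequalities $F(c)\le c$ and $c\le F(c)$ together pin down equality. The remaining subtlety is the justification in statement~4 that right-continuity (continuity from above) survives composition with the continuous operations $T$, $\min$ and $\max$; this is routine but relies essentially on $\mathcal{I}_T$ being right-continuous, and not fully continuous, in its second argument.
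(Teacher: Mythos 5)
Your proposal is correct and follows essentially the same route as the paper: statement 1 from the residuation inequality $T(a,\mathcal{I}_T(a,b))\le b$, statement 2 by showing $e(c)=A^t\Box_{\mathcal{I}_T}^{\min}c$ dominates every solution (the paper packages your adjunction step as the inequality $x\le A^t\Box_{\mathcal{I}_T}^{\min}(A\Box_T^{\max}x)$, which is equivalent), statement 3 as a corollary of statements 1--2, and statement 4 by chaining monotonicity and the right-continuity of $\mathcal{I}_T$ in its second argument. Your write-up is in fact more explicit than the paper's, which defers several steps to the cited max--min case; the only cosmetic remark is that the paper's notion of right-continuity allows arbitrary sequences $c^{(p)}\ge c$ converging to $c$, not only monotone ones, but your composition argument covers that case verbatim (or via a sandwich using monotonicity).
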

 \noindent The application $F$ being right-continuous  at a point $c\in [0,1]^{n \times 1}$ means: for any sequence $(c^{(k)})$  in  $[0,1]^{n \times 1}$ such that  $(c^{(k)})$ converges to $c$ when  $k \rightarrow \infty$ and verifying $\forall k, c^{(k)} \geq c$, we have:  
\begin{center}
    $F(c^{(k)}) \rightarrow F(c)$ when $k \rightarrow \infty$.
\end{center}
\begin{proof}
  We adapt the proofs given in \citep{arxiv.2301.06141} for the $\max-\min$ composition  to the $\max-T$ composition  in the following way:
  
\noindent
The first assertion is deduced from the inequality 
$T(a , {\cal I}_T(a , b)) \leq b$.

\noindent For the second statement, if we suppose that $F(c) = c$, it means that  $e = A^t \Box_{{\cal I}_T}^{\min} c$ is a solution of the system $A \Box_T^{\max } x = c$ (so it is consistent). \\ 
For the proof of the implication $\Longleftarrow$, we use the general inequality: 
$$\forall x\in [0 , 1]^{m \times 1},\,x  \leq A^t \Box_{{\cal I}_T}^{\min} (A\Box_T^{\max }x), $$
which is deduced from the inequality $b \leq {\cal I}_T(a , T(a , b))$. We also obtain that $e = A^t \Box_{{\cal I}_T}^{\min} c$ is the greatest solution of the system $A \Box_T^{\max } x = c$. Clearly, we have also proven the equivalence (\ref{eq:consiste}).

\noindent
Thanks to the two assertions above, the idempotence of the application $F$ is proved as in the case of the 
$\max-\min$ composition.

\noindent
It is easy to prove that the application $F$ is increasing. The right-continuity   of the residual implicator ${\cal I}_T$ in its second argument implies the right-continuity of the application $F$.
 
\end{proof}

\begin{example}
(continued)
Let $c = \begin{bmatrix} 0.74\\ 0.66\\ 0.62\end{bmatrix}$. When we use any of the two t-norms $T_P$ or $T_L$, we have the equality $F(c) = c$. So the systems  $A \Box_{T_P}^{\max}x= c$ and $A \Box_{T_L}^{\max}x= c$ are consistent. 
\end{example}

\subsection{Chebyshev distance associated to the second member of a system of \texorpdfstring{$\max-T$}{max-T} fuzzy relational equations}

\noindent To the matrix $A$ and the second member $b$ of the system $(S)$ of $\max-T$ fuzzy relational equations, see (\ref{eq:sys}), let us  associate  the set of  vectors $c = [c_i] \in [0,1]^{n \times 1}$ such that the system $A \Box_{T}^{\max} x = c$ is consistent:
\begin{equation}\label{def:setofsecondmembersB}
    \mathcal{C} = \{ c = [c_i] \in {[0,1]}^{n \times 1} \mid  A \Box_{T}^{\max} x = c \text{ is consistent} \}.
\end{equation}
\noindent This set allows us to define the Chebyshev distance associated to the second member $b$ of the system $(S)$.
 \begin{definition}\label{def:chebyshevdist}
The Chebyshev distance associated to the second member $b$ of the system $(S): A \Box_{T}^{\max}x = b$ is: 
\begin{equation}\label{eq:delta}
    \Delta = \Delta(A,b) =  \inf_{c \in \mathcal{C}} \Vert b - c \Vert 
    \end{equation}

    \end{definition}
\noindent where:
\[ \Vert b - c \Vert = \max_{1 \leq i \leq n}\mid b_i - c_i\mid.\]

We have the following fundamental result,  proven for $\max-\min$ composition in \citep{cuninghame1995residuation}, that we extend for $\max-T$ composition:    
\begin{theorem}\label{th:1}  

\begin{equation}\label{eq:deltaCUNINGH}
    \Delta =        \min\{\delta\in [0, 1] \mid \underline b(\delta) \leq F(\overline b(\delta))\}.
\end{equation}

\end{theorem}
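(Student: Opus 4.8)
The plan is to identify the right-hand side of (\ref{eq:deltaCUNINGH}) as the minimum of the set
\[ D = \{\delta \in [0,1] \mid \underline{b}(\delta) \leq F(\overline{b}(\delta))\}, \]
and to prove $\Delta = \min D$ in three stages: first that $D$ is well-behaved, then that $\inf D = \Delta$, and finally that the infimum is attained. To begin I would record that $D$ is nonempty and that $\Delta$ is a legitimate candidate value: at $\delta = 1$ we have $\underline{b}(1) = 0$ while $\overline{b}(1)$ is the all-ones vector, so $\underline{b}(1) \leq F(\overline{b}(1))$ holds trivially and $1 \in D$; and since every $c \in \mathcal{C}$ lies in $[0,1]^{n\times 1}$, the distance $\Vert b - c\Vert$ never exceeds $1$, so $\Delta \in [0,1]$.

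The core of the argument is the equivalence
\[ \delta \in D \quad \Longleftrightarrow \quad \text{there exists } c \in \mathcal{C} \text{ with } \Vert b - c\Vert \leq \delta. \]
For $\Longleftarrow$, take such a $c$; by (\ref{ineq:bbbar}) this means $\underline{b}(\delta) \leq c \leq \overline{b}(\delta)$, and since $F$ is increasing with $F(c) = c$ (Proposition \ref{proposition:appF}), applying $F$ to $c \leq \overline{b}(\delta)$ yields $c = F(c) \leq F(\overline{b}(\delta))$, whence $\underline{b}(\delta) \leq F(\overline{b}(\delta))$. For $\Longrightarrow$, the natural witness is $c := F(\overline{b}(\delta))$: idempotence gives $F(c) = c$, so $c \in \mathcal{C}$ by Proposition \ref{proposition:appF}; and from $F(\overline{b}(\delta)) \leq \overline{b}(\delta)$ together with the hypothesis $\underline{b}(\delta) \leq F(\overline{b}(\delta)) = c$ we obtain $\underline{b}(\delta) \leq c \leq \overline{b}(\delta)$, i.e. $\Vert b - c\Vert \leq \delta$ again by (\ref{ineq:bbbar}). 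This equivalence gives $\inf D = \Delta$ at once: every $\delta \in D$ dominates some achievable distance, so $\Delta \leq \inf D$, while for any $\varepsilon > 0$ a near-optimal $c \in \mathcal{C}$ places $\Delta + \varepsilon$ into $D$, forcing $\inf D \leq \Delta$. Note also that $D$ is upward closed, since $\underline{b}$ is decreasing and $\overline{b}$ increasing in $\delta$ and $F$ is increasing, so $(\Delta,1] \subseteq D$.

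The remaining and, I expect, most delicate point is the attainment $\Delta \in D$, which is exactly where right-continuity of $F$ enters (full continuity is not available). Assuming $\Delta < 1$ (the case $\Delta = 1$ being already settled by $1 \in D$), I would pick a sequence $\delta^{(k)} \downarrow \Delta$ with $\delta^{(k)} > \Delta$, so that each $\delta^{(k)} \in D$ by upward-closedness and hence $\underline{b}(\delta^{(k)}) \leq F(\overline{b}(\delta^{(k)}))$. The maps $\delta \mapsto \underline{b}(\delta)$ and $\delta \mapsto \overline{b}(\delta)$ are continuous, and crucially $\overline{b}(\delta^{(k)}) \geq \overline{b}(\Delta)$ with $\overline{b}(\delta^{(k)}) \to \overline{b}(\Delta)$, so the approximation is from above; the right-continuity of $F$ (Proposition \ref{proposition:appF}) then gives $F(\overline{b}(\delta^{(k)})) \to F(\overline{b}(\Delta))$. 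Passing to the limit preserves the componentwise inequality, yielding $\underline{b}(\Delta) \leq F(\overline{b}(\Delta))$, i.e. $\Delta \in D$. Together with $\inf D = \Delta$ this establishes $\Delta = \min D$, which is precisely (\ref{eq:deltaCUNINGH}). The subtlety to stress is that one must approach $\Delta$ strictly from above so that the approximating vectors dominate $\overline{b}(\Delta)$, matching the one-sided hypothesis under which $F$ was shown continuous; this is why right-continuity, and not merely monotonicity, is the property that makes the minimum attained.
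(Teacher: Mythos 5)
Your proposal is correct and is essentially the paper's own proof, reorganized: your equivalence ``$\delta \in D \Leftrightarrow \exists\, c \in \mathcal{C}$ with $\Vert b - c \Vert \leq \delta$'' packages the paper's Steps 2 and 3 (same witness $c = F(\overline b(\delta)) \in \mathcal{C}$, same use of (\ref{ineq:bbbar}) together with the monotonicity and idempotence of $F$), and your attainment argument via a sequence $\delta^{(k)} \downarrow \Delta$ and right-continuity of $F$ is exactly the paper's Step 1. The only cosmetic difference is the order (the paper establishes attainment at $\inf D$ first, you establish $\inf D = \Delta$ first), which changes nothing of substance.
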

 The proof of (Theorem \ref{th:1}) is given in three steps.
\begin{proof}
\noindent We put $\Delta'  = \inf\{\delta\in [0, 1] \,| \, \underline b(\delta) \leq F(\overline b(\delta))\}$. 

\noindent $\bullet$ Step 1. 
We want to prove that we have $\Delta' = \min\{\delta\in [0, 1] \mid \underline b(\delta) \leq F(\overline b(\delta))\} $. Indeed, as    we have 
$\underline b(1)  = 0 \leq F(\overline b(1))$, the set $\{ \delta\in [0, 1] \mid \underline b(\delta) \leq F(\overline b(\delta))\}$ is non-empty. Therefore, 
 we can find a sequence $(\delta_k)$ in $[0, 1]$ verifying:
\begin{enumerate}
\item $\forall k,\, \underline b(\delta_k) \leq F(\overline b(\delta_k))$, 
\item $\forall k,\, \delta_k \geq \Delta'$,
\item $ \delta_k \rightarrow \Delta'$.
\end{enumerate}
As $\forall k, \overline{b}(\delta_k)\geq \overline{b}(\Delta')$ and $\overline{b}(\delta_k)  \rightarrow \overline{b}(\Delta')$, the right-continuity of the application $F$ at $\overline{b}(\Delta')$ implies: 
$$F(\overline b(\delta_k)) \rightarrow F(\overline b(\Delta')).$$
The inequalities $\underline b(\delta_k) \leq F(\overline b(\delta_k))$ lead, by the passage to the limit, to: 
 $$\underline b(\Delta') \leq F(\overline b(\Delta')).$$
So $\Delta' = \min\{\delta\in [0, 1] \mid \underline b(\delta) \leq F(\overline b(\delta))\}.$

\noindent $\bullet$ Step 2. We want to prove that $\Delta\leq \Delta'$. 

\noindent
We have:
$$\underline b(\Delta') \leq F(\overline b(\Delta'))  \leq  \overline b(\Delta').$$
By (\ref{ineq:bbbar}) and by noticing that  $F(\overline b(\Delta')) \in {\cal C}$, we deduce:
$$\Delta \leq \Vert b - F(\overline b(\Delta')) \Vert \leq \Delta'.$$

\noindent $\bullet$ Step 3. We want to prove that $\Delta' \leq \Delta$. For this it is sufficient to show that:  
$$\forall c \in {\cal C},\, \Delta' \leq \Vert  b - c\Vert.$$
Indeed, for any $  c \in {\cal C}$, we put $\delta =  \Vert  b - c\Vert$, and we have by (\ref{ineq:bbbar}):
$$\underline b(\delta)  \leq c \leq  \overline b(\delta).$$
Therefore  $F(c)  = c$  and the growth of $F$ lead to: 
$$\underline b(\delta) \leq c = F(c)  \leq F(\overline b(\delta)).$$ So:
$$\Delta' \leq \delta =  \Vert  b - c\Vert.$$
We have proved that $\Delta = \Delta' = \min\{\delta\in [0, 1] \mid \underline b(\delta) \leq F(\overline b(\delta))\}$.

\end{proof}
We define the set of Chebyshev approximations of $b$:
\begin{definition}
   The set of Chebyshev approximations of $b$ is defined using the set $\mathcal{C}$, see (\ref{def:setofsecondmembersB}), and the Chebyshev distance associated to $b$ (Definition \ref{def:chebyshevdist}): 
\begin{equation}\label{def:chebyshevsetapproxB} 
{\cal C}_{b}  = \{c \in {\cal C} \,\mid \,  \Vert b - c\Vert = \Delta(A ,  b)\}.
\end{equation} 
\end{definition}
We have:
\begin{corollary}\label{coro:1}
$F(\overline{b}(\Delta))\in {\cal C}_b$ is the greatest Chebyshev approximation of the second member $b$ of the system $(S)$. 
\end{corollary}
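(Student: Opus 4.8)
The plan is to verify two things in turn: first that $F(\overline{b}(\Delta))$ lies in $\mathcal{C}_b$, and then that it dominates every other element of $\mathcal{C}_b$.

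For membership in $\mathcal{C}_b$, I would first check that $F(\overline{b}(\Delta)) \in \mathcal{C}$. This is immediate from the idempotence of $F$ (item 3 of Proposition \ref{proposition:appF}): since $F(F(\overline{b}(\Delta))) = F(\overline{b}(\Delta))$, the second statement of Proposition \ref{proposition:appF} tells us that the system $A \Box_T^{\max} x = F(\overline{b}(\Delta))$ is consistent, i.e. $F(\overline{b}(\Delta)) \in \mathcal{C}$. Next I would estimate $\Vert b - F(\overline{b}(\Delta)) \Vert$. Because the minimum in (\ref{eq:deltaCUNINGH}) is attained at $\delta = \Delta$ (Step 1 of the proof of Theorem \ref{th:1}), we have $\underline{b}(\Delta) \leq F(\overline{b}(\Delta))$, and by item 1 of Proposition \ref{proposition:appF} applied to $\overline{b}(\Delta)$ we also have $F(\overline{b}(\Delta)) \leq \overline{b}(\Delta)$. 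The sandwiching $\underline{b}(\Delta) \leq F(\overline{b}(\Delta)) \leq \overline{b}(\Delta)$ then yields $\Vert b - F(\overline{b}(\Delta)) \Vert \leq \Delta$ via (\ref{ineq:bbbar}). Since $F(\overline{b}(\Delta)) \in \mathcal{C}$ and $\Delta = \inf_{c \in \mathcal{C}} \Vert b - c \Vert$, the reverse inequality $\Vert b - F(\overline{b}(\Delta)) \Vert \geq \Delta$ holds by definition of the infimum, so equality follows and $F(\overline{b}(\Delta)) \in \mathcal{C}_b$.

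For maximality, I would take an arbitrary $c \in \mathcal{C}_b$ and show $c \leq F(\overline{b}(\Delta))$. Since $\Vert b - c \Vert = \Delta$, inequality (\ref{ineq:bbbar}) gives $c \leq \overline{b}(\Delta)$. Applying the increasing property of $F$ (item 4 of Proposition \ref{proposition:appF}) yields $F(c) \leq F(\overline{b}(\Delta))$. Finally, because $c \in \mathcal{C}$, the second statement of Proposition \ref{proposition:appF} gives $F(c) = c$, whence $c = F(c) \leq F(\overline{b}(\Delta))$, as desired.

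I do not expect a serious obstacle here, since the heavy machinery — the right-continuity and idempotence of $F$, and the characterization $F(c) = c \Longleftrightarrow$ consistency — is already in place. The one point requiring care is that the bound $\underline{b}(\Delta) \leq F(\overline{b}(\Delta))$ relies on the minimum in Theorem \ref{th:1} being \emph{attained} rather than merely being an infimum; this is precisely what Step 1 of that theorem's proof secures, and it is the only place where the right-continuity of $F$ enters this argument.
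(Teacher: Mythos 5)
Your proof is correct and follows essentially the same route as the paper's: the sandwich $\underline{b}(\Delta) \leq F(\overline{b}(\Delta)) \leq \overline{b}(\Delta)$ combined with (\ref{ineq:bbbar}) and the definition of $\Delta$ as an infimum gives membership in ${\cal C}_b$, and maximality follows from $c = F(c) \leq F(\overline{b}(\Delta))$ via the monotonicity of $F$. Your explicit justification that $F(\overline{b}(\Delta)) \in {\cal C}$ (idempotence plus the fixed-point characterization) and your remark that the argument hinges on the minimum in Theorem \ref{th:1} being attained are both points the paper leaves more implicit, but the underlying argument is identical.
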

\begin{proof}
Let us first show that we have $ \Vert b - F(\overline b( \Delta))\Vert =  \Delta$. \\Indeed, from (Theorem \ref{th:1}) and (Proposition \ref{proposition:appF}),   we know that  $\underline b(\Delta) \leq F(\overline b(\Delta)) \leq \overline b(\Delta)$, so by (\ref{ineq:bbbar}), $\Vert b - F(\overline b( \Delta))\Vert \leq  \Delta $.

\noindent
 As
$F(\overline b( \Delta)) \in {\cal C}$ (Proposition \ref{proposition:appF}),   we know by (Definition \ref{def:chebyshevdist}): 
$$\Delta   \leq   
\Vert b - F(\overline b( \Delta)) \Vert.$$
We have proven  that 
$F(\overline b( \Delta)) \in {\cal C}_b$.

\noindent
To prove that $F(\overline b( \Delta))$ is the greatest Chebyshev approximation of the second member $b$ of the system $(S)$ we must prove that $c \leq F(\overline b( \Delta))$ for any $c\in {\cal C}_b$.\\
Let $c\in {\cal C}_b$. As 
 $\Delta   =  \Vert  b - c\Vert$,     by (\ref{ineq:bbbar}) we have: $$\underline b(\Delta)  \leq c \leq  \overline b(\Delta).$$
 By the growth of $F$, we deduce:
$$c = F(c) \leq F(\overline b( \Delta)).$$
\end{proof}
\noindent As a consequence, we have: 
\begin{corollary}
    \label{corollary:deltazero}
    $$\Delta = \min_{c \in \mathcal{C}} \Vert b - c \Vert.$$
   \[
       \Delta = 0 \Longleftrightarrow\text{ the system $(S)$ is consistent.} 
   \]
\end{corollary}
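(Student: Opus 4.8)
The plan is to read off both assertions directly from the machinery already established, since Corollary \ref{coro:1} does essentially all the work and no new argument is needed.

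For the first identity, I would argue that the infimum defining $\Delta$ in (Definition \ref{def:chebyshevdist}) is in fact attained. Corollary \ref{coro:1} asserts that $F(\overline{b}(\Delta)) \in {\cal C}_b$; unwinding the definition (\ref{def:chebyshevsetapproxB}) of ${\cal C}_b$, this exhibits a concrete vector $c^\star = F(\overline{b}(\Delta))$ lying in ${\cal C}$ and satisfying $\Vert b - c^\star \Vert = \Delta$. Hence the value $\Delta = \inf_{c \in {\cal C}} \Vert b - c \Vert$ is realized at $c^\star$ (which also shows ${\cal C} \neq \emptyset$), so the infimum is a minimum and $\Delta = \min_{c \in {\cal C}} \Vert b - c \Vert$.

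For the equivalence, the $\Longleftarrow$ direction is immediate: if $(S)$ is consistent then $b$ itself belongs to ${\cal C}$ by (\ref{def:setofsecondmembersB}), whence $0 \leq \Delta \leq \Vert b - b \Vert = 0$, forcing $\Delta = 0$. For the $\Longrightarrow$ direction I would take $\Delta = 0$ and note that $\overline{b}(0) = \underline{b}(0) = b$ by (\ref{def:bstarhautbas}), using that the components of $b$ lie in $[0,1]$. Then Theorem \ref{th:1} gives $b = \underline{b}(0) \leq F(\overline{b}(0)) = F(b)$, while the first assertion of Proposition \ref{proposition:appF} gives $F(b) \leq b$; combining the two yields $F(b) = b$, and the second assertion of Proposition \ref{proposition:appF} then identifies this equality with the consistency of $A \Box_{T}^{\max} x = b$, i.e. of $(S)$. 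Alternatively, the minimum form just proved lets one conclude at once: $\Delta = 0$ produces some $c \in {\cal C}$ with $\Vert b - c \Vert = 0$, that is $c = b \in {\cal C}$.

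There is no genuine obstacle here; the only points requiring a little care are to verify that $\overline{b}(0)$ and $\underline{b}(0)$ both collapse to $b$ (which is exactly where $b_i \in [0,1]$ is used), and to cite the correct pieces of Corollary \ref{coro:1} and Proposition \ref{proposition:appF} rather than reproving the attainment of the distance.
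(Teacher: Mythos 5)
Your proof is correct and follows exactly the route the paper intends: the corollary is stated there without proof as an immediate consequence of Corollary \ref{coro:1}, and your argument (attainment of the infimum at $F(\overline{b}(\Delta)) \in {\cal C}_b$, then the equivalence via $b \in {\cal C}$ in one direction and $c = b$ forced by $\Vert b - c\Vert = 0$ in the other) is precisely the expected fill-in. The alternative derivation via $F(b) = b$ and Proposition \ref{proposition:appF} is a harmless extra confirmation, not a different method.
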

\noindent Therefore, $\Delta = 0$ is a  necessary and sufficient condition for the system $(S)$ to be consistent.

\section{Preliminaries computations}
\label{sec:prelim}
In this section,  we  solve two inequalities. The first one will be involved in the analytical formula to compute  the Chebyshev distance  associated to the second member of a system of $\max-T$ fuzzy relational equations when the t-norm is the usual product i.e., $T = T_P$, see (\ref{eq:tnormproduct}). The second one will be involved in the formula to compute the Chebyshev distance  when we use Lukasiewicz's t-norm i.e.,  $T = T_L$, see (\ref{eq:tnormluka}).  A similar study was carried out  in \citep{arxiv.2301.06141} for $\max-\min$ composition.

\subsection{\texorpdfstring{T-norm product $T = T_P$}{T-norm product T = TP}}
 \noindent  Let $x,y,z,u \in [0,1]$ be fixed.
\noindent  For the t-norm product $T_P$, let us study the following inequality that involves the Goguen product $\underset{GG}{\longrightarrow}$ (see (\ref{eq:tnormproduct})), for $\delta \in [0, 1]$:  
\begin{equation}\label{ineq:gen:tp}
 \underline{x}(\delta) \leq u \cdot (y \underset{GG}{\longrightarrow} \overline{z}(\delta))
\end{equation}
\noindent where $y \underset{GG}{\longrightarrow} \overline{z}(\delta) = \begin{cases}1 & \text{ if } y - z \leq \delta \\
\frac{z+ \delta}{y} & \text{ if } y -z > \delta\end{cases}$.\\
\textit{We look for the smallest value of $\delta$ so that the inequality (\ref{ineq:gen:tp}) is true}. \\
\noindent Let:
\begin{equation}\label{eq:sigmagg}
    \sigma_{GG}(u,x,y,z) = \max[(x-u)^+, \min(\varphi(u,x,y,z),(y-z)^+)]
\end{equation}
\noindent where $\varphi(u,x,y,z) = \begin{cases}\frac{(x \cdot y - u\cdot z)^+}{u+y} &\text{ if } u > 0 \\ x &\text{ if } u = 0\end{cases}$. 
\begin{proposition}\label{In3}
For all $\delta \in [0,1]$, we have:\\
\[\underline{x}(\delta) \leq u \cdot (y \underset{GG}{\longrightarrow} \overline{z}(\delta)) \Longleftrightarrow \sigma_{GG}(u,x,y,z) \leq \delta.\]
\end{proposition}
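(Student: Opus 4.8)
The plan is to determine, for fixed $u,x,y,z$, the exact set of $\delta\in[0,1]$ satisfying (\ref{ineq:gen:tp}) and to check that it equals $[\sigma_{GG}(u,x,y,z),1]$. First I would record the monotonicity that governs the whole argument: the left-hand side $\underline{x}(\delta)=(x-\delta)^+$ is non-increasing in $\delta$, while the right-hand side $u\cdot(y\underset{GG}{\longrightarrow}\overline{z}(\delta))$ is non-decreasing in $\delta$, since $\overline{z}(\delta)$ increases and the Goguen product is increasing in its second argument. Hence the solution set of (\ref{ineq:gen:tp}) is an up-set of the form $[\delta_0,1]$, so it suffices to identify the threshold $\delta_0$ and to verify $\delta_0=\sigma_{GG}(u,x,y,z)$. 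Taking $\delta=1$ forces $\overline{z}(1)=1$ and reduces the inequality to $0\le u$, so the set is nonempty and $\delta_0\le 1$.

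Next I would split along the two branches of $y\underset{GG}{\longrightarrow}\overline{z}(\delta)$. On the branch $y-z\le\delta$, the Goguen product equals $1$, the right-hand side is $u$, and $(x-\delta)^+\le u$ is equivalent (using $u\ge 0$) to $\delta\ge(x-u)^+$. On the branch $y-z>\delta$ one has $y>z\ge 0$, hence $y>0$ and $\overline{z}(\delta)=z+\delta$, so the right-hand side equals $u\cdot\frac{z+\delta}{y}$. For $u>0$, clearing the denominator and again using non-negativity of the right-hand side reduces $(x-\delta)^+\le u\cdot\frac{z+\delta}{y}$ to the linear condition $\delta\ge\frac{(xy-uz)^+}{u+y}=\varphi(u,x,y,z)$; for $u=0$ the right-hand side is $0$ and the inequality collapses to $\delta\ge x=\varphi(0,x,y,z)$. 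Each branch thus contributes a linear threshold, and the boundary between branches sits at $(y-z)^+$.

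The crux is to glue these two branch-thresholds into the single closed form $\max[(x-u)^+,\min(\varphi,(y-z)^+)]$, which amounts to proving two comparison identities in the regime $x>u$, $y>z$, $u>0$, namely
\[
\frac{xy-uz}{u+y}\ge x-u \iff y-z\ge x-u,
\qquad
\frac{xy-uz}{u+y}\ge y-z \iff x-u\ge y-z,
\]
together with the remark that $x>u$ and $y>z$ already force $xy>uz$, so the positive part is active. Both identities follow by cross-multiplying and cancelling $u$ (respectively $y$), and they are exactly what guarantees that the valid set is a genuine interval with no gap at the branch boundary $(y-z)^+$ — a gap would contradict the monotonicity above, so that configuration must be excluded. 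Feeding the identities into the branch analysis, I would check the configurations $\varphi\ge(y-z)^+$, and $\varphi<(y-z)^+$ with $(x-u)^+\le(y-z)^+$, verifying in each that the infimum of the valid $\delta$ equals $\sigma_{GG}$, while the remaining configuration $\varphi<(y-z)^+<(x-u)^+$ is impossible by the second identity. The edge cases $u=0$, $y=z$ and $x\le u$ I would dispatch directly, each collapsing $\sigma_{GG}$ to the appropriate single term. The main obstacle is precisely this reconciliation: the two piecewise thresholds individually bear no visible relation to the $\min/\max$ structure of $\sigma_{GG}$, and only the algebraic comparison identities (plus the monotonicity that forbids a gap) make the combination come out to the stated formula.
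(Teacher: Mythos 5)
Your proof is correct, and it takes a genuinely different route from the paper's. The paper proves the equivalence pointwise: it fixes $\delta$, assumes one side, and derives the other by direct case analysis ($u=0$ or $u>0$; $y-z\le\delta$ or $y-z>\delta$; $(xy-uz)^+$ zero or positive), never exploiting any structure of the set of valid $\delta$. You instead compute that set outright: monotonicity makes it an up-set, each branch of the Goguen implication contributes a threshold ($(x-u)^+$ on the branch $y-z\le\delta$, $\varphi$ on the branch $y-z>\delta$), and the two cross-multiplication identities
\begin{equation*}
\frac{xy-uz}{u+y}\ge x-u \Longleftrightarrow y-z\ge x-u,
\qquad
\frac{xy-uz}{u+y}\ge y-z \Longleftrightarrow x-u\ge y-z
\qquad (u>0,\ y>z)
\end{equation*}
glue the two half-lines into the single interval $[\sigma_{GG}(u,x,y,z),1]$, ruling out the one configuration that would create a gap (your double safeguard here --- identity~2 and the up-set property --- is valid either way). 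These identities, which the paper never isolates, are the real content of your argument: they explain where the $\max[\,\cdot\,,\min(\cdot,\cdot)]$ shape of $\sigma_{GG}$ in (\ref{eq:sigmagg}) comes from, and reading the equivalence as ``the valid set equals $[\sigma_{GG},1]$'' delivers Theorem~\ref{theoremedeGege} in the same stroke. What the paper's more pedestrian verification buys is self-containedness: each implication is checked by bare inequalities, with no appeal to interval structure. One small repair to yours: monotonicity alone only shows the valid set is an up-set, which could a priori be open at its left end; to assert the closed form $[\delta_0,1]$ at the outset you should invoke continuity in $\delta$ of both sides of (\ref{ineq:gen:tp}) (the Goguen implication is continuous in its second argument, since $(z+\delta)/y \to 1$ as $\delta \uparrow y-z$), or simply note that attainment falls out of your explicit branch computation, since each branch condition is a closed half-line.
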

\begin{proof}\mbox{}\\
If  $u = 0$, then  $\sigma_{GG}(0, x, y, z)  =  x$  and we immediately get the desired equivalence:  
$$(x - \delta)^+ \leq   0 \Longleftrightarrow x \leq \delta \Longleftrightarrow \sigma_{GG}(0, x, y, z)  \leq \delta.$$
It remains for us to study the case where $u > 0$.

\noindent 
Suppose that  $ (x - \delta)^+ \leq  u(y  \underset{GG}{\longrightarrow} \overline z(\delta))$.

\begin{itemize}
    \item If  $y -  z \leq \delta$, we have: 
 $$(x - \delta)^+ \leq u (y  \underset{GG}{\longrightarrow} \overline z(\delta)) = u \cdot 1 = u.$$
 We know that the  inequality  $(x - \delta)^+ \leq u$  is equivalent to the inequality   $(x - u)^+ \leq \delta$    \citep{arxiv.2301.06141}. 
 As $\min(\varphi(u, x, y, z), (y - z)^+) \leq (y - z)^+ \leq \delta$, we obtain:  
 $$\sigma_{GG}(u, x, y, z)   = \max[(x - u)^+, \min(\varphi(u, x, y, z), (y - z)^+)] \leq \delta.$$
 \item If $y - z > \delta$, we have: 
$$(x - \delta)^+ \leq  u(y  \underset{GG}{\longrightarrow} \overline z(\delta)) = u \dfrac{ z +   \delta}{y} \leq u$$
so, $(x - u)^+ \leq \delta$   and moreover
\begin{align}
(x - \delta)^+ -   u(y  \underset{GG}{\longrightarrow} \overline z(\delta)) &= \max(x - \delta - \dfrac{uz + u \delta}{y}, -\dfrac{uz + u \delta}{y})\nonumber\\
& =  \max(\dfrac{xy - \delta y -  uz - u \delta}{y}, -\dfrac{uz + u \delta}{y})\leq 0.\nonumber \end{align}
\noindent
The last inequality is equivalent to   $xy - \delta y -  uz - u \delta   \leq 0$ and also: 
$$\dfrac{xy  -     uz}{u + y} \leq \delta.$$
\end{itemize}
Then:

\noindent

$\bullet$ If $(xy - u z)^+ = 0$, we have:
$$\varphi(u, x, y, z)  = \dfrac{(xy - u z)^+}{u + y} = 0$$
and also
$$\sigma_{GG}(u, x, y, z)    = \max[(x - u)^+, \min(\varphi(u, x, y, z), (y - z)^+) = (x -u)^+  \leq \delta.$$

\noindent
$\bullet$  If $(xy - u z)^+ >  0$, we have:
$$ \varphi(u, x, y, z)  = \dfrac{ xy - u z }{u + y}  \leq \delta$$
and also
\begin{align} \sigma_{GG}(u, x, y, z)    & = \max[(x - u)^+, \min(\varphi(u, x, y, z), (y - z)^+)]\nonumber\\
& =  \max[(x - u)^+,  \varphi(u, x, y, z)]\nonumber\\
&\leq   \delta.\nonumber\end{align}

\noindent Let us prove now the remaining implication.\\
Suppose that $\sigma_{GG}(u, x, y, z)  \leq \delta$:
\begin{itemize}
    \item If   $(x - \delta)^+ = 0$, we get trivially that $ (x - \delta)^+  = 0 \leq  u(y  \underset{GG}{\longrightarrow} \overline z(\delta))$.
    \item If $y  \underset{GG}{\longrightarrow} \overline z(\delta) = 1$, we must prove that  $ (x - \delta)^+   \leq  u$, which is equivalent to 
$(x - u)^+ \leq \delta$. We have: 
$$(x - u)^+ \leq \max[(x - u)^+, \min(\varphi(u, x, y, z), (y - z)^+] =  \sigma_{GG}(u, x, y, z)  \leq \delta.$$
\end{itemize}
\noindent It remains for us to study the case where $(x - \delta)^+  = x - \delta > 0$ and   
$y  \underset{GG}{\longrightarrow} \overline z(\delta) < 1$. 

\noindent From  $y  \underset{GG}{\longrightarrow} \overline z(\delta) < 1$, we deduce: 
$$y - z > \delta \quad \text{and} \quad y  \underset{GG}{\longrightarrow} \overline z(\delta) = \dfrac{z + \delta}{y}.$$
\noindent  Then, we have: 
\begin{align} (x - \delta)^+  - u(y  \underset{GG}{\longrightarrow} \overline z(\delta)) &=    x - \delta   - \dfrac{uz + u\delta}{y}\nonumber\\
& =   \dfrac{u + y}{y} (\dfrac{xy -     uz  }{u + y} - \delta).\nonumber\end{align}
To obtain $(x - \delta)^+  - u(y  \underset{GG}{\longrightarrow} \overline z(\delta))\leq 0$, it is sufficient to prove that $$\dfrac{xy -     uz  }{u + y}  \leq \delta.$$

\begin{itemize}
    \item If   $ xy  \leq      uz $, we have $\dfrac{xy -     uz  }{u + y} \leq 0 \leq \delta$.
    \item If  
  $xy  > uz$ and taking into account the inequality $y - z > \delta$, we deduce:  
$$\delta  \geq \sigma_{GG}(u, x, y, z)  
  \geq \min(\dfrac{(xy -     uz)^+  }{u + y}, y -z)   = \dfrac{xy -     uz  }{u + y}.$$ 
\end{itemize}
\end{proof}
\noindent
From (Proposition \ref{In3}), we immediately deduce:
\begin{theorem}\label{theoremedeGege}
For any $x,y,z,u \in [0,1]$, we have:
    \begin{equation}\label{ineq:P2}
    \sigma_{GG}(u, x, y, z) =    \min\{\delta\in [0, 1] \,\mid\, \underline{x}(\delta) \leq u \cdot (y \underset{GG}{\longrightarrow} \overline{z}(\delta)) \}.
\end{equation}
\end{theorem}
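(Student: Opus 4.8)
The plan is to deduce Theorem \ref{theoremedeGege} as an immediate consequence of Proposition \ref{In3}, which has already established the key equivalence. The final statement asserts that $\sigma_{GG}(u,x,y,z)$ is precisely the minimum of the set $D = \{\delta \in [0,1] \mid \underline{x}(\delta) \leq u \cdot (y \underset{GG}{\longrightarrow} \overline{z}(\delta))\}$, so there are really two things to verify: first that this set actually attains a minimum (i.e.\ that the infimum belongs to the set), and second that this minimum equals $\sigma_{GG}(u,x,y,z)$.

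First I would observe that, by Proposition \ref{In3}, for every $\delta \in [0,1]$ the defining inequality holds if and only if $\sigma_{GG}(u,x,y,z) \leq \delta$. Consequently the set $D$ is exactly $\{\delta \in [0,1] \mid \sigma_{GG}(u,x,y,z) \leq \delta\} = [\sigma_{GG}(u,x,y,z),\, 1]$, provided one first checks that $\sigma_{GG}(u,x,y,z)$ itself lies in $[0,1]$. This last point is a routine verification from the formula (\ref{eq:sigmagg}): each of the terms $(x-u)^+$, $\varphi(u,x,y,z)$, and $(y-z)^+$ takes values in $[0,1]$ when $x,y,z,u \in [0,1]$, and $\sigma_{GG}$ is built from these by $\max$ and $\min$.

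Once $D = [\sigma_{GG}(u,x,y,z),\,1]$ is known to be a nonempty closed subinterval of $[0,1]$, the conclusion is immediate: this interval has $\sigma_{GG}(u,x,y,z)$ as its least element, so $\min D = \sigma_{GG}(u,x,y,z)$, which is exactly (\ref{ineq:P2}). In particular the minimum is attained (taking $\delta = \sigma_{GG}(u,x,y,z)$ in the right-hand side of the equivalence of Proposition \ref{In3} gives $\sigma_{GG} \leq \sigma_{GG}$, hence the inequality (\ref{ineq:gen:tp}) holds at that $\delta$), so writing $\min$ rather than $\inf$ is justified.

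I do not anticipate a genuine obstacle here, since all the real work was carried out in the proof of Proposition \ref{In3}; the theorem is essentially a restatement of that equivalence in the language of the minimizing $\delta$. The only point requiring a word of care is the containment $\sigma_{GG}(u,x,y,z) \in [0,1]$, needed to guarantee that the candidate minimum is an admissible value of $\delta$ and that $D$ is nonempty; that is why the authors phrase the result as "we immediately deduce." The proof therefore consists of invoking Proposition \ref{In3} and reading off the minimum of the resulting half-line intersected with $[0,1]$.
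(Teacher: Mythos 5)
Your proposal is correct and matches the paper's approach: the paper likewise obtains Theorem~\ref{theoremedeGege} as an immediate consequence of Proposition~\ref{In3}, reading off that the solution set is the interval $[\sigma_{GG}(u,x,y,z),1]$ whose least element is $\sigma_{GG}(u,x,y,z)$. Your additional check that $\sigma_{GG}(u,x,y,z)\in[0,1]$ (so the minimum is attained and the set is nonempty) is a sensible detail the paper leaves implicit.
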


\noindent We illustrate this result:
\begin{example}
Let $x = 0.4$,
    $y = 0.3$,
    $z = 0.6$ and 
    $u = 0.2$. We want  to obtain the smallest value of $\delta \in [0,1]$ so that $ \underline{x}(\delta) \leq u \cdot (y \underset{GG}{\longrightarrow} \overline{z}(\delta))$ is true. We have $u \cdot ( y \underset{GG}{\longrightarrow}  z)  = 0.2$.\\

We compute $\varphi(u,x,y,z)=\frac{(0.4 \cdot 0.3 - 0.2\cdot 0.6)^+}{0.2+0.3}=0.$
    \begin{align*}
    \delta &\!\begin{aligned}[t]
    &= \sigma_{GG}(u,x,y,z) \\
    &= \max[(0.4-0.2)^+, \min(0,(0.3-0.6)^+)]\\
     &= 0.2.
     \end{aligned}
     \end{align*}
\noindent We have $\underline{x}(\delta)=x - 0.2 = 0.2$ and $\overline{z}(\delta)=z+0.2 = 0.8$. Therefore, $u \cdot (y \underset{GG}{\longrightarrow} \overline{z}(\delta))= 0.2$ and:
\begin{equation*}
   \underline{x}(\delta) = u \cdot (y \underset{GG}{\longrightarrow} \overline{z}(\delta)).
\end{equation*}
\end{example}
\subsection{\texorpdfstring{Lukasiewicz's t-norm $T = T_L$}{Lukasiewicz's t-norm T = TL}}
For Lukasiewicz's t-norm $T_L$, we study the following  inequality that involves Lukasiewicz's implication $\underset{L}{\longrightarrow}$ (see (\ref{eq:tnormluka})) for  $\delta \in [0, 1]$: 
\begin{equation}\label{ineq:gen:tl}
     \underline{x}(\delta) \leq \max(0, \,y \underset{L}{\longrightarrow} \overline{z}(\delta) - u),
\end{equation}
\noindent where $y \underset{L}{\longrightarrow} \overline{z}(\delta) = \begin{cases}1 & \text{ if } y - z \leq \delta \\
1 - y + z + \delta  & \text{ if } y -z > \delta\end{cases}$.\\
\noindent Let $v = x+u-1$ and:
\begin{equation}\label{eq:sigmaL}
  \sigma_L(u,x,y,z) = \min(x, \max(v^+, \frac{(v + y - z )^+}{2})).
\end{equation}
\noindent Then:
\begin{proposition}\label{prop4}
For all $\delta\in[0, 1]$, we have:
 \begin{equation}\label{ineq:gen:tlL}\underline{x}(\delta) \leq \max(0, y \underset{L}{\longrightarrow} \overline{z}(\delta) - u) 
 \,
 \Longleftrightarrow \,
 \sigma_L(u,x,y,z) \leq \delta.\end{equation}
 
\end{proposition}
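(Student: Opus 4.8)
The plan is to prove the equivalence directly, exploiting the explicit two-branch form of $y \underset{L}{\longrightarrow} \overline{z}(\delta)$. Writing $M = \max(v^+, \frac{(v+y-z)^+}{2})$ so that $\sigma_L(u,x,y,z) = \min(x, M)$, the condition $\sigma_L \leq \delta$ is equivalent to ``$\delta \geq x$ or $\delta \geq M$''. I would also record the two regimes: when $y - z \leq \delta$ the right-hand side of (\ref{ineq:gen:tl}) collapses to $\max(0, 1-u) = 1-u$, and when $y-z > \delta$ it equals $(1-y+z+\delta-u)^+$. Throughout I would lean on the identity already used in the proof of (Proposition \ref{In3}), namely $(x-\delta)^+ \leq w \Longleftrightarrow (x-w)^+ \leq \delta$, applied with $w = 1-u$, which gives $(x-\delta)^+ \leq 1-u \Longleftrightarrow v^+ \leq \delta$.

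For the implication $\Longleftarrow$ I would assume $\sigma_L \leq \delta$ and treat the two disjuncts. If $\delta \geq x$ then $\underline{x}(\delta) = (x-\delta)^+ = 0$, so (\ref{ineq:gen:tl}) holds trivially since its right-hand side is nonnegative. If instead $\delta \geq M$, then $\delta \geq v^+$ and $\delta \geq \frac{(v+y-z)^+}{2}$; in the regime $y-z \leq \delta$ the target inequality is $(x-\delta)^+ \leq 1-u$, which the displayed identity turns into $v^+ \leq \delta$, already in hand, while in the regime $y-z > \delta$ the bound $\delta \geq \frac{(v+y-z)^+}{2}$ gives $2\delta \geq v+y-z$, hence $1-y+z+\delta-u \geq x-\delta$, and monotonicity of $(\cdot)^+$ yields $(x-\delta)^+ \leq (1-y+z+\delta-u)^+$, as required.

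For the implication $\Longrightarrow$ I would assume (\ref{ineq:gen:tl}) and show $\delta \geq x$ or $\delta \geq M$. If $\delta \geq x$ there is nothing to prove. Otherwise $\delta < x$, so $(x-\delta)^+ = x-\delta > 0$ and the right-hand side of (\ref{ineq:gen:tl}) is strictly positive. In the regime $y-z \leq \delta$ this forces $x-\delta \leq 1-u$, i.e. $\delta \geq v$; combining with the regime condition $\delta \geq y-z$ gives $2\delta \geq v + (y-z)$, and together with $\delta \geq 0$ this delivers both $\delta \geq v^+$ and $\delta \geq \frac{(v+y-z)^+}{2}$, hence $\delta \geq M$. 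In the regime $y-z > \delta$ the strict positivity forces $1-y+z+\delta-u \geq x-\delta$, i.e. $2\delta \geq v+y-z$; since here $y-z > \delta$, this gives $2\delta \geq v + (y-z) > v + \delta$, so $\delta > v$, whence $\delta \geq v^+$, while $2\delta \geq v+y-z$ again gives $\delta \geq \frac{(v+y-z)^+}{2}$, so $\delta \geq M$.

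The structural reason everything fits is that the right-hand side of (\ref{ineq:gen:tl}) is nondecreasing in $\delta$ while $\underline{x}(\delta)$ is nonincreasing, so the solution set is an upper interval whose left endpoint is the quantity to be identified. The hard part is the bookkeeping with the truncations: extracting the averaged bound $\delta \geq \frac{(v+y-z)^+}{2}$ from the two separate bounds $\delta \geq v$ and $\delta \geq y-z$ in the forward regime $y-z \leq \delta$, and recovering $\delta \geq v^+$ from $2\delta \geq v+y-z$ in the forward regime $y-z > \delta$, where the strict inequality $\delta < y-z$ must be used. The role of the outer $\min(x, \cdot)$ --- reflecting that the inequality becomes automatic once $\delta \geq x$ --- and the correct placement of the positive parts $v^+$ and $(v+y-z)^+$ are the points that demand care.
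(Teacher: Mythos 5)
Your proof is correct and follows essentially the same route as the paper's: dispose of the trivial disjunct ($\delta \geq x$, resp.\ $x \leq \delta$), then case-split on the two regimes $y-z \leq \delta$ and $y-z > \delta$ of the Lukasiewicz implication and carry out the same algebra with $v^+$ and $\frac{(v+y-z)^+}{2}$. The only differences are cosmetic streamlinings (invoking the identity $(x-\delta)^+ \leq w \Longleftrightarrow (x-w)^+ \leq \delta$, and deriving $\delta > v$ directly from $2\delta \geq v+(y-z) > v+\delta$ where the paper argues via $f < 2g-\delta$), which do not change the structure of the argument.
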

 
\begin{proof}
We  remark that for $x \leq \delta$ the two assertions of the equivalence are verified: we have 
$(x - \delta)^+ = 0$ and $\sigma_{L}(u, x, y, z) \leq x \leq \delta$. It remains for us to show the equivalence in the case where $x > \delta$.

\noindent
Set $f(u, x) = (x + u - 1)^+, \, g(u, x, y, z)  =  \dfrac{(x + u - 1 + y - z )^+}{2}$. Then, we have:
$$\sigma_{L}(u, x, y, z) = \min(x, \max(f(u, x), g(u, x, y, z))).$$
Let us prove that 
$\underline{x}(\delta) \leq \max(0, y \underset{L}{\longrightarrow} \overline{z}(\delta) - u) 
 \,
 \Longrightarrow \,
 \sigma_L(u,x,y,z) \leq \delta$.

 \noindent
As we suppose that $ x > \delta$, we must prove:
$$\max(f(u, x), g(u, x, y, z) )\leq \delta.$$
This last inequality will be proven in the following two steps:
\begin{enumerate}
    \item  If  $y - z \leq  \delta $, then  $ y  \underset{L}{\longrightarrow} \overline z(\delta) - u = 1 - u$ and we have: 
$$x - \delta \leq \max(0,   1 - u)$$
which we put in the form  
\begin{align}x - \delta  -  \max(0,   1 - u) & = x - \delta  + \min(0, u - 1)\nonumber\\
&= \min(x - \delta, x + u -1  - \delta)   \nonumber\\
& \leq 0. \nonumber\end{align}
As $x - \delta > 0$, we deduce that  $x + u - 1 \leq \delta$ and also:  
$$f(u, x)  = (x + u - 1)^+ \leq \delta.$$
Let us show that we also have: 
$$g(u, x, y, z)  =  \dfrac{(x + u - 1 + y - z)^+}{2} \leq \delta.$$
Indeed, if  $ g(u, x, y, z) = 0$, we have trivially  $g(u, x, y, z) = 0 \leq \delta$.

\noindent
If $ g(u, x, y, z) > 0$, then we have: 
\begin{align}
 g(u, x, y, z) & = \dfrac{ x + u - 1 + y - z }{2}\nonumber\\
&\leq  \dfrac{ x + u - 1 +   \delta }{2}  \nonumber\\
&=  \dfrac{x + u -1}{2} + \dfrac{\delta}{2} \nonumber\\
& \leq  \dfrac{(x + u -1)^+}{2} + \dfrac{\delta}{2} \quad \text{(we already know that $f(u,x) =(x + u -1)^+ \leq \delta$)} \nonumber\\
& \leq \dfrac{\delta}{2} +  \dfrac{\delta}{2}  = \delta.
\nonumber\end{align}
\noindent In summary, we have shown  $y - z \leq \delta  \,\Longrightarrow \, \max(f(u, x), g(u, x, y, z) )\leq \delta$.
\item If  $y - z  >  \delta $, then 
   $ y  \underset{L}{\longrightarrow} \overline z(\delta) - u = 1 - y + z + \delta - u$ and we have: 
$$ x - \delta \leq \max(0,   1 - y + z + \delta - u).$$
We rewrite this last inequality in the form:  
\begin{align}x - \delta  -  \max(0,   1 - y + z + \delta - u) & = x - \delta  + \min(0, u - 1 + y - z     - \delta)\nonumber\\
&= \min(x - \delta,\,\, x + u - 1   + y - z    - 2 \delta)  \nonumber\\
& \leq 0. \nonumber\end{align}
As  $x - \delta > 0$, we deduce $x + u - 1   + y - z  \leq 2\delta$ and also:
$$g(u, x, y, z)   = \dfrac{ (x + y - z + u - 1)^+ }{2} \leq \delta.$$
It remains for us to show that $f(u, x) = (x + u - 1)^+ \leq \delta$. \\In fact, if $f(u, x) > 0$, then we have:  
\begin{align}
  \dfrac{  x + u - 1 + y - z   }{2}    & >     \dfrac{  x + u - 1 +  \delta  }{2}  \nonumber\\
&=  \dfrac{ x +   u - 1 }{2} +  \dfrac{\delta}{2} \nonumber\\
 & = \dfrac{ f(u, x)}{2} + \dfrac{\delta}{2}.
 \nonumber\end{align}
We deduce that  $x + u - 1 + y - z  > 0$, and also: 
$$g(u, x, y, z) = \dfrac{  x + u - 1 + y - z }{2}   > \dfrac{ f(u, x)}{2} + \dfrac{\delta}{2}.$$
As we have already shown that $ g(u, x, y, z) \leq \delta $, we finally obtain: 
$$f(u, x) < 2\,  g(u, x, y, z) -  \delta  \leq 2 \delta - \delta = \delta.$$
In summary, we have shown  $y - z >  \delta  \,\Longrightarrow \, \max(f(u, x), g(u, x, y, z) )\leq \delta$.
\end{enumerate}

\noindent We have completed the proof of the implication $\Longrightarrow$.

\noindent Let us  assume now that: $$\sigma_{L}(u, x, y, z)  = 
 \min(x, \max((x + u - 1)^+, \dfrac{(x + u - 1 + y - z )^+}{2})) \leq \delta$$ 
 and let us show that: 
$$(x - \delta)^+ \leq  \max(0, y  \rightarrow_{L } \overline z(\delta) - u).$$
As we suppose that   $x > \delta$, the inequality  
 $\sigma_{L}(u, x, y, z) \leq \delta$ becomes: 
$$\max(f(u, x), g(u, x, y, z)) =  \max((x + u - 1)^+, \dfrac{(x + u - 1+ y - z)^+}{2}) \leq \delta.$$ 
Let us put the inequality to be shown in the form: 
$$(x - \delta)^+ -  \max(0, y  \underset{L}{\longrightarrow} \overline z(\delta) - u)  \leq 0$$
and distinguish the two cases: $y - z \leq \delta$ and $y - z > \delta$.
\begin{enumerate}
    \item If   $y - z \leq \delta$, the inequality to be shown becomes: 
\begin{align}x - \delta  -  \max(0,   1 - u) & = x - \delta  + \min(0, u - 1)\nonumber\\
&= \min(x - \delta, x  + u - 1- \delta)   \nonumber\\
& \leq 0. \nonumber\end{align}
As  $x - \delta > 0$, we must show that  $ x  + u - 1- \delta \leq 0$, i.e., $x + u -1 \leq \delta$. \\But, we have:
  $$x + u -1  \leq (x + u -1)^+ = f(u, x) \leq \max(f(u, x), g(u, x, y, z))\leq \delta.$$
  \item If $y - z >  \delta$, the inequality to be shown becomes: 
\begin{align}x - \delta  -  \max(0,   1 - y + z +\delta - u) & = 
x - \delta  + \min(0, u - 1 + y - z   - \delta)\nonumber\\
&= \min(x - \delta, x +  u - 1  + y - z   -  2 \delta)   \nonumber\\
& \leq 0.\nonumber\end{align}
So, we must show that $x +  u - 1  + y - z   -  2 \delta \leq 0$, i.e 
$\dfrac{  x +  u - 1  + y - z }{2}  \leq \delta$. \\
But, we have:
$$\dfrac{ x +  u - 1  + y - z  }{2}   \leq \dfrac{  (x +  u - 1  + y - z)^+  }{2}  = g(u, x, y, z) 
\leq \max(f(u, x), g(u, x, y, z))\leq \delta.$$
\end{enumerate}
\noindent We have completed the proof of implication $\Longleftarrow $.

\end{proof}
From (Proposition \ref{prop4}), we immediately deduce:
\begin{theorem}\label{theoremedeLulu}
For any $x,y,z,u \in [0,1]$, we have:
    \begin{equation}\label{ineq:Lulu}
    \sigma_{L}(u, x, y, z) =    \min\{\delta\in [0, 1] \,\mid\, \underline{x}(\delta) \leq \max(0, \,y \underset{L}{\longrightarrow} \overline{z}(\delta) - u)  \}.
\end{equation}
\end{theorem}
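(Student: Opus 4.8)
The plan is to read the statement off directly from the threshold equivalence established in (Proposition \ref{prop4}), which is precisely the form needed to identify a minimum. Recall that Proposition \ref{prop4} asserts that for every $\delta \in [0,1]$ the inequality (\ref{ineq:gen:tl}), namely $\underline{x}(\delta) \leq \max(0, y \underset{L}{\longrightarrow} \overline{z}(\delta) - u)$, holds if and only if $\sigma_L(u,x,y,z) \leq \delta$. So my first step is simply to rewrite the set whose minimum is sought: the set $\{\delta \in [0,1] \mid \underline{x}(\delta) \leq \max(0, y \underset{L}{\longrightarrow} \overline{z}(\delta) - u)\}$ coincides, by this equivalence, with $\{\delta \in [0,1] \mid \sigma_L(u,x,y,z) \leq \delta\}$, i.e. with the interval $[\sigma_L(u,x,y,z), 1]$.

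Next I would check that $\sigma_L(u,x,y,z)$ is a legitimate value of $\delta$, namely that $\sigma_L(u,x,y,z) \in [0,1]$. Writing $v = x+u-1$, each of the three quantities entering $\sigma_L(u,x,y,z) = \min(x, \max(v^+, \tfrac{(v+y-z)^+}{2}))$ lies in $[0,1]$: we have $x \in [0,1]$ by hypothesis; $v^+ \in [0,1]$ since $v = x+u-1 \leq 1$; and $\tfrac{(v+y-z)^+}{2} \in [0,1]$ since $v + y - z = x + u - 1 + y - z \leq 2$. Hence $\sigma_L(u,x,y,z) \in [0,1]$, so the interval $[\sigma_L(u,x,y,z),1]$ is a nonempty subset of $[0,1]$ that contains its left endpoint.

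It then remains only to observe that the minimum of $[\sigma_L(u,x,y,z),1]$ is attained at its left endpoint. Taking $\delta = \sigma_L(u,x,y,z)$, the condition $\sigma_L(u,x,y,z) \leq \delta$ holds (with equality), so by Proposition \ref{prop4} this value itself satisfies (\ref{ineq:gen:tl}); conversely any $\delta < \sigma_L(u,x,y,z)$ violates $\sigma_L(u,x,y,z) \leq \delta$ and hence, by the same equivalence, fails (\ref{ineq:gen:tl}). This establishes (\ref{ineq:Lulu}).

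I expect no genuine obstacle to remain once Proposition \ref{prop4} is in hand, since all the case analysis on whether $y - z \leq \delta$ or $y - z > \delta$ was already carried out there. The only point that warrants care is the well-definedness check $\sigma_L(u,x,y,z) \in [0,1]$: it is exactly what guarantees that the minimum is \emph{achieved} at an admissible point rather than being merely an infimum, and it is the reason the statement can be phrased with $\min$ rather than $\inf$.
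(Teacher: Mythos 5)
Your proposal is correct and takes essentially the same route as the paper, which likewise obtains Theorem \ref{theoremedeLulu} as an immediate consequence of the equivalence in Proposition \ref{prop4}. Your explicit checks---that $\sigma_L(u,x,y,z)\in[0,1]$ and that the left endpoint of the resulting interval is attained, justifying $\min$ rather than $\inf$---are details the paper leaves implicit, not a different argument.
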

\noindent We illustrate this result:
\begin{example}
Let $x = 0.4$,
    $y = 0.6$,
    $z = 0.3$ and 
    $u = 0.6$. We want  to obtain the smallest value of $\delta \in [0,1]$ so that $ \underline{x}(\delta) \leq \max(0, y \underset{L}{\longrightarrow} \overline{z}(\delta) - u)$ is true. \\
    We have $\max(0, y \underset{L}{\longrightarrow} z - u) = 0.1$ and $v = x+u-1 = 0$. 
    \begin{align*}
    \delta &\!\begin{aligned}[t]
    &= \sigma_{L}(u,x,y,z) \\
    &=  \min(0.4, \max(0, \frac{(0 + 0.6 - 0.3 )^+}{2}))\\
     &= 0.15.
     \end{aligned}
     \end{align*}
\noindent We have $\underline{x}(\delta)=x - 0.15 = 0.25$ and $\overline{z}(\delta)=z+0.15 = 0.45$. Therefore, $\max(0, y \underset{L}{\longrightarrow} \overline{z}(\delta) - u)= 0.25$ and:
\begin{equation*}
   \underline{x}(\delta) = \max(0, y \underset{L}{\longrightarrow} \overline{z}(\delta) - u).
\end{equation*}
\end{example}

\section{Analytical formulas to compute the Chebyshev distances associated to the second members of the systems \texorpdfstring{$A \Box_{T_P}^{\max} x = b$}{A max-product x = b} and \texorpdfstring{$A \Box_{T_L}^{\max} x = b$}{A max-Lukasiewicz x = b}}
\label{sec:formulas}
\noindent In this section, we compute  the Chebyshev distance  $\Delta$ (Definition \ref{def:chebyshevdist}) associated to the second member of a system of $\max$-product fuzzy relational equations $A \Box_{T_P}^{\max} x = b$, by an explicit analytical formula in terms  of components  of the matrix $A$  and those of the second member $b$. We also give an analytical formula  for the Chebyshev distance associated to the second member of a system of $\max$-Lukasiewicz fuzzy relational equations $A \Box_{T_L}^{\max} x = b$. \\
\noindent We rely on the equality:
$\Delta =        \min\{\delta\in [0, 1] \mid \underline b(\delta) \leq F(\overline b(\delta))\},$ see (\ref{eq:deltaCUNINGH}), which was proven for any continuous t-norm. 
\begin{notation}
\noindent Using a continuous t-norm $T$ whose associated implication operator is ${\cal I}_T$, we set for $1\leq i, k \leq n$ and  $1 \leq j \leq m$:
\begin{itemize}
    \item $K_i =  \{\delta \in [0, 1] \mid \underline b(\delta)_i \leq F(\overline b(\delta))_i \}$,
    \item $\beta_j  = \min_{1\leq   k \leq n}\,{\cal I}_T(a_{kj},  \overline b(\delta))_k)$,
    \item $W_{ij}^T = \{\delta \in [0, 1] \mid \underline b(\delta)_i \leq T(a_{ij}, \beta_j) \}$.
\end{itemize}
\end{notation}

\subsection{Analytical formula to compute the Chebyshev distance associated to the second member \texorpdfstring{$b$}{b} of the system \texorpdfstring{$A \Box_{T_P}^{\max} x = b$}{A max-product x = b}}

For any $i = 1, 2, \dots, n$ and $j = 1, 2, \dots ,m$, we have in this case: 
$$W_{ij}^{T_P}  = \{\delta \in [0, 1] \mid \underline b(\delta)_i \leq T_P(a_{ij}, \beta_j)\} = \{\delta \in [0, 1] \mid \underline b(\delta)_i \leq a_{ij}\beta_j\}.$$
where $\beta_j = \min_{1 \leq k \leq n}\, {\cal I}_{T_P}(a_{kj}, \overline{b}(\delta)_k) 
=  \min_{1 \leq k \leq n}\, a_{kj} \underset{GG}{\longrightarrow} \overline{b}(\delta)_k$.  
Then, we have: 
\begin{equation}\label{eq:deltap1}W_{ij}^{T_P}  =  
\{\delta \in [0, 1] \mid \underline b(\delta)_i \leq \min_{1 \leq k \leq n}\,a_{ij} \,  (a_{kj} \underset{GG}{\longrightarrow} \overline{b}(\delta)_k)\}.\end{equation} 
As $F(\overline b(\delta))_i = \max_{1 \leq j \leq m}\, T_P(a_{ij}, \beta_j) $, we still have: 
\begin{equation}\label{eq:deltap2}K_i  =  \bigcup_{1\leq j \leq m}\, W_{ij}^{T_P}.   \end{equation}
 
\noindent
The Chebyshev distance $\Delta$ associated to the second member $b$ of the system of $\max$-product fuzzy relational equations $A \Box_{T_P}^{\max} x = b$ is given by the following formula:
\begin{theorem}\label{th:2}
\begin{equation} \label{eq:deltap}
\Delta = \max_{1 \leq i \leq n}\,\delta_i
\end{equation}
where for $i = 1, 2, \dots, n$: 
\begin{equation} \label{eq:DeltapiGG} 
\delta_i =  \min_{1 \leq j \leq m}\, \max_{1 \leq k \leq n}\,\sigma_{GG}\,(a_{ij}, b_i, a_{kj}, b_k).
\end{equation}
\noindent See (\ref{eq:sigmagg}) for the definition of $\sigma_{GG}$.
\end{theorem}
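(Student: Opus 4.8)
The plan is to start from the characterization $\Delta = \min\{\delta\in [0,1] \mid \underline b(\delta) \leq F(\overline b(\delta))\}$ supplied by Theorem \ref{th:1}, which holds for any continuous t-norm, and to decompose the defining set coordinate by coordinate. First I would observe that the vector inequality $\underline b(\delta) \leq F(\overline b(\delta))$ holds if and only if it holds in every coordinate $i$, that is, if and only if $\delta \in \bigcap_{i=1}^{n} K_i$. Hence the set appearing in Theorem \ref{th:1} is exactly $\bigcap_{i=1}^{n} K_i$, and $\Delta = \min \bigcap_{i=1}^{n} K_i$.

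Next I would analyze each $K_i$ separately. Using the identity $K_i = \bigcup_{1\leq j \leq m} W_{ij}^{T_P}$ from (\ref{eq:deltap2}) together with the explicit description (\ref{eq:deltap1}), the condition $\delta \in W_{ij}^{T_P}$ reads $\underline b(\delta)_i \leq \min_{1\leq k \leq n} a_{ij}\,(a_{kj} \underset{GG}{\longrightarrow} \overline b(\delta)_k)$, which I would rewrite as the conjunction over $k$ of the single inequalities $\underline b(\delta)_i \leq a_{ij}\,(a_{kj} \underset{GG}{\longrightarrow} \overline b(\delta)_k)$. Each such inequality is precisely of the form treated in Proposition \ref{In3} and Theorem \ref{theoremedeGege}, under the substitution $x = b_i$, $u = a_{ij}$, $y = a_{kj}$, $z = b_k$. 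Invoking that theorem, the $k$-th inequality is equivalent to $\sigma_{GG}(a_{ij}, b_i, a_{kj}, b_k) \leq \delta$.

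The remaining work is purely order-theoretic. Since each single-$k$ condition is equivalent to $\sigma_{GG}(a_{ij}, b_i, a_{kj}, b_k) \leq \delta$, taking the conjunction over $k$ shows $W_{ij}^{T_P} = [\,\max_{1\leq k \leq n} \sigma_{GG}(a_{ij}, b_i, a_{kj}, b_k),\, 1\,]$, a closed up-set of $[0,1]$. Forming the union over $j$ turns the collection of left endpoints into a minimum, giving $K_i = [\,\delta_i,\, 1\,]$ with $\delta_i = \min_{1\leq j \leq m} \max_{1\leq k \leq n} \sigma_{GG}(a_{ij}, b_i, a_{kj}, b_k)$, which is exactly (\ref{eq:DeltapiGG}). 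Finally, intersecting over $i$ yields $\bigcap_{i=1}^{n} K_i = [\,\max_{1\leq i \leq n} \delta_i,\, 1\,]$, whose minimum is $\max_{1\leq i \leq n} \delta_i$, establishing (\ref{eq:deltap}).

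I expect the only delicate point to be the bookkeeping of how $\min$ and $\max$ interact under union and intersection of these threshold intervals: a union of up-sets $[w_{ij},1]$ is governed by the smallest threshold, whereas an intersection is governed by the largest, which is what produces the nested $\min_j \max_k$ inside each $\delta_i$ and the outer $\max_i$. Because each set $\{\delta \mid \sigma_{GG} \leq \delta\}$ is a closed interval $[\sigma_{GG},1]$ and the index sets are finite, all the sets in play are closed intervals with right endpoint $1$; consequently every infimum is attained and equals the corresponding endpoint, which guarantees that the $\min$ in Theorem \ref{th:1} is realized and equals $\max_{1\leq i \leq n} \delta_i$.
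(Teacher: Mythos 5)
Your proof is correct and takes essentially the same route as the paper: both start from Theorem \ref{th:1}, decompose the condition $\underline b(\delta) \leq F(\overline b(\delta))$ coordinatewise into the sets $K_i = \bigcup_{1\leq j \leq m} W_{ij}^{T_P}$, apply Proposition \ref{In3} (via Theorem \ref{theoremedeGege}) with $(u,x,y,z) = (a_{ij}, b_i, a_{kj}, b_k)$ to each inequality, and recover $\Delta$ through the same $\max_i \min_j \max_k$ bookkeeping. Your observation that every set involved is a closed up-interval $[\,\cdot\,,1]$ is merely a more geometric phrasing of the paper's chain of equivalences.
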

\begin{proof} For any $i = 1, 2, \dots, n$ and $j = 1, 2, \dots, m$, we deduce from (\ref{eq:deltap1}) that for  
$\delta\in [0, 1]$, we have: 
$$\delta \in W_{ij}^{T_P}  
\,\Longleftrightarrow\,
 \delta \geq \,\max_{1 \leq k \leq n}\,\sigma_{GG}\,(a_{ij}, b_i, a_{kj}, b_k).$$
Using (\ref{eq:deltap2}), we get:
$$\delta \in K_i 
\,\Longleftrightarrow\,
\exists \, j\in\{1, 2, \dots, m\} \,\, \text{such that}\,\, \delta \geq \max_{1 \leq k \leq n}\,\sigma_{GG}\,(a_{ij}, b_i, a_{kj}, b_k).
$$
So, we obtain: 
$$\delta \in K_i 
\,\Longleftrightarrow\, 
 \delta \geq \min_{1 \leq j \leq m}\,  \max_{1 \leq k \leq n}\,\sigma_{GG}\,(a_{ij}, b_i, a_{kj}, b_k).$$  
As by definition $\delta \in K_i    
\,\Longleftrightarrow\,
\underline b(\delta)_i \leq F(\overline b(\delta))_i$ and $\Delta =         \min\{\delta\in [0, 1] \mid \underline b(\delta) \leq F(\overline b(\delta))\} $, we get: 
$$\Delta = \max_{1 \leq i \leq n}\, \min_{1 \leq j \leq m}\,  \max_{1 \leq k \leq n}\,\sigma_{GG}\,(a_{ij}, b_i, a_{kj}, b_k).$$
\end{proof}

\noindent The following example illustrates this result: 
\begin{example}
In what follows, we rely on the following matrix and vector, which were used as example in \citep{cuninghame1995residuation,pedrycz1990inverse}:

\begin{equation}\label{eq:Abofpedrycz}
A = \begin{bmatrix}
    1 & 0.4 & 0.5 & 0.7\\
    0.7 & 0.5 & 0.3 & 0.5\\
    0.2 & 1 & 1 & 0.6\\ 
    0.4 & 0.5 & 0.5 & 0.8
\end{bmatrix}\text{ and } b = \begin{bmatrix}
        0.4\\ 1\\ 0.2\\ 0
    \end{bmatrix}.
\end{equation}

\noindent For the system $A \Box_{T_P}^{\max} x = b$ which uses the t-norm product $T_P$, we compute from (Theorem \ref{th:2}):

\begin{align*}
    \delta_1 &=  \min_{1 \leq j \leq m}\, \max_{1 \leq k \leq n}\,\sigma_{GG}\,(a_{1j}, b_1, a_{kj}, b_k) = \min(0.11, 0.23, 0.2, 0.21) = 0.11,\\
     \delta_2 &=  \min_{1 \leq j \leq m}\, \max_{1 \leq k \leq n}\,\sigma_{GG}\,(a_{2j}, b_2, a_{kj}, b_k) = \min(0.42, 0.6, 0.72, 0.62) = 0.42,\\
     \delta_3 &=  \min_{1 \leq j \leq m}\, \max_{1 \leq k \leq n}\,\sigma_{GG}\,(a_{3j}, b_3, a_{kj}, b_k) = \min(0.13, 0.07, 0.07, 0.11) = 0.07,\\
     \delta_4 &=  \min_{1 \leq j \leq m}\, \max_{1 \leq k \leq n}\,\sigma_{GG}\,(a_{4j}, b_4, a_{kj}, b_k) = \min(0.0,0.0,0.0,0.0) = 0.0.\\
\end{align*}

\noindent Therefore, the Chebyshev distance associated to the second member $b$ of the system $A \Box_{T_P}^{\max} x = b$ is $\Delta = \max(\delta_1,\delta_2,\delta_3,\delta_4) = 0.42$.

We compute $\overline{b}(\Delta)=\begin{bmatrix}
    0.82\\ 1\\ 0.62\\ 0.42
\end{bmatrix}$ and $F(\overline{b}(\Delta))= \begin{bmatrix}
    0.82\\ 0.57\\ 0.62\\ 0.42
\end{bmatrix}$ is the greatest Chebyshev approximation of the second member~$b$. 
\end{example}

\subsection{Analytical formula to compute the Chebyshev distance associated to the second member \texorpdfstring{$b$}{b} of the system \texorpdfstring{$A \Box_{T_L}^{\max} x = b$}{A max-Lukasiewicz x = b}}

For any $i = 1, 2, \dots, n$ and $j = 1, 2, \dots ,m$, we have in this case: 
$$W_{ij}^{T_L}  = \{\delta \in [0, 1] \mid \underline b(\delta)_i \leq T_L(a_{ij}, \beta_j)\} = \{\delta \in [0, 1] \mid \underline b(\delta)_i \leq \max(0,\,\, a_{ij} + \beta_j - 1)\}.$$
where $\beta_j = \min_{1 \leq k \leq n}\, {\cal I}_{T_L}(a_{kj}, \overline{b}(\delta)_k) 
=  \min_{1 \leq k \leq n}\, a_{kj} \underset{L}{\longrightarrow} \overline{b}(\delta)_k$.

\noindent
Set $u_{ij} = 1 - a_{ij} $. Then, we have:  
 \begin{align}
T_L(a_{ij}, \beta_j) & =  \max (0,\,\,\beta_j  - u_{ij})\nonumber\\
& = \max(0,\,\,[\min_{1 \leq k \leq n}\, a_{kj} \underset{L}{\longrightarrow} \overline{b}(\delta)_k] -  u_{ij})\nonumber\\
& = \min_{1 \leq k \leq n}\, \max(0,\,\,[a_{kj} \underset{L}{\longrightarrow} \overline{b}(\delta)_k]  -  u_{ij})\nonumber\\
& = \min_{1 \leq k \leq n}\,  ([a_{kj} \underset{L}{\longrightarrow} \overline{b}(\delta)_k] - u_{ij})^+.  \nonumber\end{align}
So, we obtain: 
\begin{equation}\label{eq:deltaL1}W_{ij}^{T_L}  =  
\{\delta \in [0, 1] \mid \underline b(\delta)_i \leq \min_{1 \leq k \leq n}\,  ([a_{kj} \underset{L}{\longrightarrow} \overline{b}(\delta)_k] - u_{ij})^+\}.\end{equation} 
As $F(\overline b(\delta))_i = \max_{1 \leq j \leq m}\, T_L(a_{ij}, \beta_j) $, we still have: 
\begin{equation}\label{eq:deltaL2}K_i  =  \bigcup_{1\leq j \leq m}\, W_{ij}^{T_L}.    \end{equation}
 
\noindent
The Chebyshev distance $\Delta$ associated to the second member $b$ of the system of $\max$-Lukasiewicz fuzzy relational equations $A \Box_{T_L}^{\max} x = b$ is given by the following formula:
\begin{theorem}\label{th:3}
\begin{equation} \label{eq:DeltaL}
\Delta = \max_{1 \leq i \leq n}\,\delta_i
\end{equation}
where for any $i = 1, 2, \dots n$: 
\begin{equation} \label{eq:DeltapiL} 
\delta_i =  \min_{1 \leq j \leq m}\, \max_{1 \leq k \leq n}\,\sigma_{L }\,(1  - a_{ij}, b_i, a_{kj}, b_k).
\end{equation}
\noindent See (\ref{eq:sigmaL}) for the definition of $\sigma_{L}$.
\end{theorem}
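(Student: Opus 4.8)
The plan is to reproduce, step for step, the argument already used for the max-product case (proof of Theorem~\ref{th:2}), now drawing on Proposition~\ref{prop4} in place of Proposition~\ref{In3}. The whole computation rests on two ingredients already in hand: the formula $\Delta = \min\{\delta \in [0,1] \mid \underline b(\delta) \leq F(\overline b(\delta))\}$ from Theorem~\ref{th:1}, valid for any continuous t-norm, together with the set-level decomposition $K_i = \bigcup_{1 \leq j \leq m} W_{ij}^{T_L}$ recorded in (\ref{eq:deltaL2}) and the explicit description of $W_{ij}^{T_L}$ in (\ref{eq:deltaL1}). Since $\underline b(\delta) \leq F(\overline b(\delta))$ is a componentwise condition, $\Delta$ is the least $\delta$ lying in every $K_i$; if each $K_i$ turns out to be an upward-closed set $[\delta_i, 1]$, then $\bigcap_i K_i = [\max_i \delta_i, 1]$ and the formula $\Delta = \max_i \delta_i$ follows at once.

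First I would convert the description of $W_{ij}^{T_L}$ into a threshold on $\delta$. By (\ref{eq:deltaL1}), membership $\delta \in W_{ij}^{T_L}$ means $\underline b(\delta)_i \leq \min_{1 \leq k \leq n}([a_{kj} \underset{L}{\longrightarrow} \overline b(\delta)_k] - u_{ij})^+$ with $u_{ij} = 1 - a_{ij}$. A real number lies below a minimum exactly when it lies below each term, so this is equivalent to the conjunction over all $k$ of the inequalities $\underline b(\delta)_i \leq \max(0, [a_{kj} \underset{L}{\longrightarrow} \overline b(\delta)_k] - u_{ij})$. Each of these is precisely the inequality studied in Proposition~\ref{prop4}, read with $x = b_i$, $y = a_{kj}$, $z = b_k$ and $u = u_{ij} = 1 - a_{ij}$; hence it holds iff $\sigma_L(1 - a_{ij}, b_i, a_{kj}, b_k) \leq \delta$. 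Taking the conjunction over $k$ gives $\delta \in W_{ij}^{T_L} \Longleftrightarrow \delta \geq \max_{1 \leq k \leq n} \sigma_L(1 - a_{ij}, b_i, a_{kj}, b_k)$.

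Next I would pass from the $W_{ij}^{T_L}$ to $K_i$ using (\ref{eq:deltaL2}): a union over $j$ corresponds to an existential quantifier, so $\delta \in K_i$ iff $\delta \geq \max_{1 \leq k \leq n} \sigma_L(1 - a_{ij}, b_i, a_{kj}, b_k)$ for some $j$, that is, iff $\delta \geq \min_{1 \leq j \leq m} \max_{1 \leq k \leq n} \sigma_L(1 - a_{ij}, b_i, a_{kj}, b_k) =: \delta_i$. This confirms $K_i = [\delta_i, 1]$, and substituting into $\Delta = \min\{\delta \mid \forall i,\ \delta \in K_i\}$ yields $\Delta = \max_{1 \leq i \leq n} \delta_i$. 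I do not expect any serious obstacle here: all the analytic difficulty---the case split on whether $y - z \leq \delta$ and the behaviour of the Lukasiewicz implication together with the truncation $\max(0, \cdot)$---has already been absorbed into Proposition~\ref{prop4}. The only points needing care are the two quantifier interchanges (below-a-minimum versus conjunction, union versus existential/minimum) and the observation that $\sigma_L \leq \delta \Leftrightarrow \delta \geq \sigma_L$ makes each $K_i$ upward-closed, so that the final intersection collapses to a single $\max_i$; both are routine.
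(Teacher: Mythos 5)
Your proposal is correct and follows essentially the same route as the paper's own proof: you reduce membership in $W_{ij}^{T_L}$ to the threshold condition $\delta \geq \max_{1 \leq k \leq n} \sigma_L(1-a_{ij}, b_i, a_{kj}, b_k)$ via Proposition~\ref{prop4}, pass to $K_i$ through the union (\ref{eq:deltaL2}), and conclude with Theorem~\ref{th:1}. The only difference is that you spell out the routine quantifier interchanges and the upward-closedness of the sets $K_i$, which the paper leaves implicit.
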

\begin{proof} For any $i = 1, 2, \dots, n$ and $j = 1, 2, \dots, m$, we deduce from (\ref{eq:deltaL1}) that for  
$\delta\in [0, 1]$, we have: 
$$\delta \in W_{ij}^{T_L}  
\,\Longleftrightarrow\,
 \delta \geq \,\max_{1 \leq k \leq n}\,\sigma_{LK}\,(1 - a_{ij}, b_i, a_{kj}, b_k).$$
Using (\ref{eq:deltaL2}), we get:
$$\delta \in K_i 
\,\Longleftrightarrow\,
\exists \, j\in\{1, 2, \dots, m\} \,\, \text{such that}\,\, \delta \geq \max_{1 \leq k \leq n}\,\sigma_{LK}\,(1 - a_{ij}, b_i, a_{kj}, b_k).
$$
So, we obtain: 
$$\delta \in K_i 
\,\Longleftrightarrow\, 
 \delta \geq \min_{1 \leq j \leq m}\, \max_{1 \leq k \leq n}\,\sigma_{LK}\,(1 - a_{ij}, b_i, a_{kj}, b_k).$$  
As by definition $\delta \in K_i    
\,\Longleftrightarrow\,
\underline b(\delta)_i \leq F(\overline b(\delta))_i$ and $\Delta =         \min\{\delta\in [0, 1] \mid \underline b(\delta) \leq F(\overline b(\delta))\} $, we get: 
$$\Delta = \max_{1 \leq i \leq n}\, \min_{1 \leq j \leq m}\,  \max_{1 \leq k \leq n}\,\sigma_{LK}\,(1 - a_{ij}, b_i, a_{kj}, b_k).$$
\end{proof}  
 
\begin{example}

 We rely on the matrix $A$ and the vector $b$ used in the previous example, see (\ref{eq:Abofpedrycz}), to construct the system $A \Box_{T_L}^{\max} x = b$, which uses  with the Lukasiewicz's t-norm  $T_L$. From (Theorem \ref{th:3}), we have:
\begin{align*}
    \delta_1 &=  \min_{1 \leq j \leq m}\, \max_{1 \leq k \leq n}\,\sigma_{L }\,(1  - a_{1j}, b_1, a_{kj}, b_k) = \min(0, 0.4, 0.35, 0.25) = 0,\\
    \delta_2 &=  \min_{1 \leq j \leq m}\, \max_{1 \leq k \leq n}\,\sigma_{L }\,(1  - a_{2j}, b_2, a_{kj}, b_k) = \min(0.45, 0.65, 0.75, 0.65) = 0.45,\\
    \delta_3 &=  \min_{1 \leq j \leq m}\, \max_{1 \leq k \leq n}\,\sigma_{L }\,(1  - a_{3j}, b_3, a_{kj}, b_k) = \min(0.2, 0, 0, 0.2) = 0,\\
    \delta_4 &=  \min_{1 \leq j \leq m}\, \max_{1 \leq k \leq n}\,\sigma_{L }\,(1  - a_{4j}, b_4, a_{kj}, b_k) = \min(0,0,0,0) = 0.\\
\end{align*}
\noindent Therefore, the Chebyshev distance associated to the second member $b$ of the system $A \Box_{T_L}^{\max} x = b$ is $\Delta = \max(\delta_1,\delta_2,\delta_3,\delta_4) = 0.45$.

We compute $\overline{b}(\Delta)=\begin{bmatrix}
    0.85 \\ 1\\ 0.65\\ 0.45
\end{bmatrix}$ and $F(\overline{b}(\Delta))= \begin{bmatrix}
    0.85\\ 0.55\\ 0.65\\ 0.45
\end{bmatrix}$ is the greatest Chebyshev approximation of the second member~$b$. 

\end{example}

\section{Conclusion}

In this article, we gave explicit analytical formulas for computing  the Chebyshev distance associated to the second member of a system of $\max$-product fuzzy relational equations and that associated to the second member of a system of $\max$-Lukasiewicz fuzzy relational equations.

The results of \citep{arxiv.2301.06141} may be then extended to these two systems. For each system, we can describe the sets of Chebyshev approximations of its second member and its approximate solutions set. Similarly to the   $\max-\min$ learning paradigm introduced in \citep{arxiv.2301.06141}, in which the learning error is expressed in terms of $L_\infty$ norm, we can tackle the development of a $\max$-product (resp. $\max$-Lukasiewicz) learning paradigm. The formula of \citep{arxiv.2301.06141}, which computes the minimum value of the learning error according to  training data and the method to construct approximate weight matrices whose learning error is minimal can be directly extended to the case of the $\max$-product composition or the  $\max$-Lukasiewicz composition. 

As applications, we have the complete solution of the problem of the invertibility of a fuzzy relation for $\max$-Lukasiewicz composition: we know the set of matrices $A$ which admit a pre-inverse or a post-inverse. Moreover, for $\max-\min$ composition, if a matrix $A$ has no preinverse (resp. postinverse), we know how to compute, using  the $L_\infty$ norm, an approximate preinverse  (resp. postinverse) for $A$. 
For both cases of the $\max-\min$ composition and the $\max$-product composition, the problem of the invertibility of a fuzzy relation is already solved \citep{wu2021analytical,wu2022analytical}.  

\bibliographystyle{plainnat}

\begin{thebibliography}{0}
\providecommand{\natexlab}[1]{#1}
\providecommand{\url}[1]{\texttt{#1}}
\expandafter\ifx\csname urlstyle\endcsname\relax
  \providecommand{\doi}[1]{doi: #1}\else
  \providecommand{\doi}{doi: \begingroup \urlstyle{rm}\Url}\fi

\end{thebibliography}


\begin{thebibliography}{24}
    \providecommand{\natexlab}[1]{#1}
    \providecommand{\url}[1]{\texttt{#1}}
    \expandafter\ifx\csname urlstyle\endcsname\relax
      \providecommand{\doi}[1]{doi: #1}\else
      \providecommand{\doi}{doi: \begingroup \urlstyle{rm}\Url}\fi
    
    \bibitem[Baaj(2023)]{arxiv.2301.06141}
    Ismaïl Baaj.
    \newblock Max-min learning of approximate weight matrices from fuzzy data,
      2023.
    \newblock URL \url{https://arxiv.org/abs/2301.06141}.
    
    \bibitem[Cuninghame-Green and
      Cechl{\'a}rov{\'a}(1995)]{cuninghame1995residuation}
    RA~Cuninghame-Green and Katar{\'\i}na Cechl{\'a}rov{\'a}.
    \newblock Residuation in fuzzy algebra and some applications.
    \newblock \emph{Fuzzy Sets and Systems}, 71\penalty0 (2):\penalty0 227--239,
      1995.
    
    \bibitem[Di~Nola et~al.(1982)Di~Nola, Pedrycz, and Sessa]{di1982solution}
    Antonio Di~Nola, Witold Pedrycz, and Salvatore Sessa.
    \newblock On solution of fuzzy relational equations and their characterization.
    \newblock \emph{Busefal}, 12:\penalty0 60--71, 1982.
    
    \bibitem[Di~Nola et~al.(1984)Di~Nola, Pedrycz, Sessa, and Zhuang]{di1984fuzzy}
    Antonio Di~Nola, Witold Pedrycz, Salvatore Sessa, and Wang~Pei Zhuang.
    \newblock Fuzzy relation equation under a class of triangular norms: A survey
      and new results.
    \newblock \emph{Stochastica}, 8\penalty0 (2):\penalty0 99--145, 1984.
    
    \bibitem[Di~Nola et~al.(1989)Di~Nola, Sanchez, Pedrycz, and Sessa]{di2013fuzzy}
    Antonio Di~Nola, Elie Sanchez, Witold Pedrycz, and Salvatore Sessa.
    \newblock \emph{Fuzzy relation equations and their applications to knowledge
      engineering}.
    \newblock Kluwer Academic Publishers, 1989.
    
    \bibitem[Gottwald(1986)]{gottwald1986characterizations}
    Siegfried Gottwald.
    \newblock Characterizations of the solvability of fuzzy equations.
    \newblock \emph{Elektronische Informationsverarbeitung und Kybernetik},
      22\penalty0 (2/3):\penalty0 67--91, 1986.
    
    \bibitem[Hirota and Pedrycz(1999)]{hirota1999fuzzy}
    Kaoru Hirota and Witold Pedrycz.
    \newblock Fuzzy relational compression.
    \newblock \emph{IEEE Transactions on Systems, Man, and Cybernetics, Part B
      (Cybernetics)}, 29\penalty0 (3):\penalty0 407--415, 1999.
    
    \bibitem[Klement et~al.(2013)Klement, Mesiar, and
      Pap]{klement2013Fulltriangular}
    Erich~Peter Klement, Radko Mesiar, and Endre Pap.
    \newblock \emph{Triangular norms}, volume~8.
    \newblock Springer Science \& Business Media, 2013.
    
    \bibitem[Klir and Yuan(1994)]{klir1994approximate}
    George~J Klir and Bo~Yuan.
    \newblock Approximate solutions of systems of fuzzy relation equations.
    \newblock In \emph{Proceedings of 1994 IEEE 3rd International Fuzzy Systems
      Conference}, pages 1452--1457. IEEE, 1994.
    
    \bibitem[Li and Fang(2010)]{li2010chebyshev}
    Pingke Li and Shu-Cherng Fang.
    \newblock Chebyshev approximation of inconsistent fuzzy relational equations
      with max-t composition.
    \newblock In \emph{Fuzzy Optimization}, pages 109--124. Springer, 2010.
    
    \bibitem[Miyakoshi and Shimbo(1985)]{MIYAKOSHI198553}
    Masaaki Miyakoshi and Masaru Shimbo.
    \newblock Solutions of composite fuzzy relational equations with triangular
      norms.
    \newblock \emph{Fuzzy Sets and Systems}, 16\penalty0 (1):\penalty0 53--63,
      1985.
    \newblock ISSN 0165-0114.
    
    \bibitem[Nobuhara et~al.(2006)Nobuhara, Pedrycz, Sessa, and
      Hirota]{nobuhara2006motion}
    Hajime Nobuhara, Witold Pedrycz, Salvatore Sessa, and Kaoru Hirota.
    \newblock A motion compression/reconstruction method based on max t-norm
      composite fuzzy relational equations.
    \newblock \emph{Information Sciences}, 176\penalty0 (17):\penalty0 2526--2552,
      2006.
    
    \bibitem[Pedrycz(1982)]{pedrycz1982fuzzy}
    Witold Pedrycz.
    \newblock Fuzzy relational equations with triangular norms and their
      resolutions.
    \newblock \emph{Busefal}, 11:\penalty0 24--32, 1982.
    
    \bibitem[Pedrycz(1983)]{pedrycz1983numerical}
    Witold Pedrycz.
    \newblock Numerical and applicational aspects of fuzzy relational equations.
    \newblock \emph{Fuzzy sets and systems}, 11\penalty0 (1):\penalty0 1--18, 1983.
    
    \bibitem[Pedrycz(1985)]{pedrycz1985generalized}
    Witold Pedrycz.
    \newblock On generalized fuzzy relational equations and their applications.
    \newblock \emph{Journal of mathematical Analysis and applications},
      107\penalty0 (2):\penalty0 520--536, 1985.
    
    \bibitem[Pedrycz(1990{\natexlab{a}})]{pedrycz1990algorithms}
    Witold Pedrycz.
    \newblock Algorithms for solving fuzzy relational equations in a probabilistic
      setting.
    \newblock \emph{Fuzzy Sets and Systems}, 38\penalty0 (3):\penalty0 313--327,
      1990{\natexlab{a}}.
    
    \bibitem[Pedrycz(1990{\natexlab{b}})]{pedrycz1990inverse}
    Witold Pedrycz.
    \newblock Inverse problem in fuzzy relational equations.
    \newblock \emph{Fuzzy Sets and systems}, 36\penalty0 (2):\penalty0 277--291,
      1990{\natexlab{b}}.
    
    \bibitem[Sanchez(1976)]{sanchez1976resolution}
    Elie Sanchez.
    \newblock Resolution of composite fuzzy relation equations.
    \newblock \emph{Information and control}, 30\penalty0 (1):\penalty0 38--48,
      1976.
    
    \bibitem[Sanchez(1977)]{sanchez1977}
    Elie Sanchez.
    \newblock Solutions in composite fuzzy relation equations: Application to
      medical diagnosis in brouwerian logic.
    \newblock In M.~M. Gupta, G.~N. Saridis, and B.~R. Gaines, editors, \emph{Fuzzy
      automata and decision processes}, pages 221--234. Amsterdam: North-Holland,
      1977.
    
    \bibitem[Wangming(1986)]{wangming1986fuzzy}
    Wu~Wangming.
    \newblock Fuzzy reasoning and fuzzy relational equations.
    \newblock \emph{Fuzzy sets and systems}, 20\penalty0 (1):\penalty0 67--78,
      1986.
    
    \bibitem[Wen et~al.(2022)Wen, Wu, and Li]{WEN2022}
    Ching-Feng Wen, Yan-Kuen Wu, and Zhaowen Li.
    \newblock Algebraic formulae for solving systems of max-min inverse fuzzy
      relational equations.
    \newblock \emph{Information Sciences}, 2022.
    \newblock ISSN 0020-0255.
    
    \bibitem[Wu et~al.(2021)Wu, Lur, Kuo, and Wen]{wu2021analytical}
    Yan-Kuen Wu, Yung-Yih Lur, Hsun-Chih Kuo, and Chingfeng Wen.
    \newblock An analytical method to compute the approximate inverses of a fuzzy
      matrix with max-product composition.
    \newblock \emph{IEEE Transactions on Fuzzy Systems}, 2021.
    
    \bibitem[Wu et~al.(2022)Wu, Lur, Wen, and Lee]{wu2022analytical}
    Yan-Kuen Wu, Yung-Yih Lur, Ching-Feng Wen, and Shie-Jue Lee.
    \newblock Analytical method for solving max-min inverse fuzzy relation.
    \newblock \emph{Fuzzy Sets and Systems}, 440:\penalty0 21--41, 2022.
    
    \bibitem[Xiao et~al.(2019)Xiao, Zhu, Chen, and Yang]{xiao2019linear}
    Gang Xiao, Tianxiang Zhu, Yanhua Chen, and Xiaopeng Yang.
    \newblock Linear searching method for solving approximate solution to system of
      max-min fuzzy relation equations with application in the instructional
      information resources allocation.
    \newblock \emph{IEEE Access}, 7:\penalty0 65019--65028, 2019.
    
    \end{thebibliography}

\end{document}